\documentclass[onefignum,onetabnum]{siamonline250211}

\usepackage{lipsum}
\usepackage{amsfonts}
\usepackage{graphicx}
\usepackage{epstopdf}
\usepackage{algorithmic}
\ifpdf
  \DeclareGraphicsExtensions{.eps,.pdf,.png,.jpg}
\else
  \DeclareGraphicsExtensions{.eps}
\fi

\usepackage{enumitem}
\setlist[enumerate]{leftmargin=.5in}
\setlist[itemize]{leftmargin=.5in}

\newsiamremark{remark}{Remark}
\newsiamremark{hypothesis}{Hypothesis}
\crefname{hypothesis}{Hypothesis}{Hypotheses}
\newsiamthm{claim}{Claim}
\newsiamremark{fact}{Fact}
\crefname{fact}{Fact}{Facts}

\headers{PnP Volumetric Reconstruction for CS-LSM}{J. Jia, Y. Gong, X. Zhang, J. Chai, Y. Ding, and Y. Lou}

\title{Plug-and-Play Volumetric Reconstruction for Compressive Sensing Light-Sheet Microscopy\thanks{Submitted to the editors DATE.
\funding{J.J. was partially supported by AMS-Simons Travel Grant 330934. Y.D. was partially supported by NIH grants R00HL148493 and R01HL162635, NSF grant 2503230, and the Cecil H. and Ida Green Professorship in Systems Biology Science. Y.L. was partially supported by NSF CAREER award DMS-2414705.}}}

\author{
Jianqing Jia\thanks{School of Data and Information Sciences, The University of North Carolina at Chapel Hill, Chapel Hill, NC 27599, USA (\email{jqjia@unc.edu}).}
\and
Yi Gong\thanks{Department of Mathematics, The University of North Carolina at Chapel Hill, Chapel Hill, NC 27599, USA (\email{yigong@unc.edu}). Jia and Gong contributed equally to this work.}
\and
Xinyuan Zhang\thanks{Department of Bioengineering, The University of Texas at Dallas, Richardson, TX 75080, USA (\email{xinyuan.zhang@utdallas.edu}, \email{jichen.chai@utdallas.edu}, \email{yichen.ding@utdallas.edu}).}
\and
Jichen Chai\footnotemark[4]
\and
Yichen Ding\footnotemark[4]
\and
Yifei Lou\thanks{Department of Mathematics, School of Data and Information Sciences, The University of North Carolina at Chapel Hill, Chapel Hill, NC 27599, USA (\email{yflou@unc.edu}).}
}

\usepackage{cite}
\usepackage{amsmath,amssymb,amsfonts}         
\usepackage{algorithm}
\usepackage{stfloats}
\usepackage{multirow}
\usepackage{bm}
\usepackage{array}
\usepackage{graphicx}
\usepackage{textcomp}
\usepackage{xcolor}
\usepackage{array}
\usepackage{pgfplots}
\usepackage{tikz}
\pgfplotsset{compat=newest}
\def\BibTeX{{\rm B\kern-.05em{\sc i\kern-.025em b}\kern-.08em
    T\kern-.1667em\lower.7ex\hbox{E}\kern-.125emX}}

\usepackage[normalem]{ulem}

\newtheorem{assumption}{Assumption}[section]

\usepackage{amsopn}

\usepackage{subcaption}

\ifpdf
\hypersetup{
  pdftitle={Plug-and-Play Volumetric Reconstruction for Compressive Sensing Light-Sheet Microscopy},
  pdfauthor={Y. Gong, X. Zhang, J. Chai, Y. Ding, and Y. Lou}
}
\fi

\begin{document}

\maketitle

\begin{abstract} 
We investigate volumetric reconstruction for compressive sensing light-sheet microscopy (CS-LSM), where fast volumetric imaging is achieved by encoding multiple axial planes into each camera exposure.
To recover the underlying volume from highly multiplexed measurements, we propose a plug-and-play (PnP) framework that flexibly incorporates any user-specified denoiser into the reconstruction process. Building on a slice-based formulation, we further introduce an axial-coupled model that exploits correlations between adjacent slices to improve volumetric continuity. For efficient computation, we derive a Woodbury-based update for the data-consistency step in both the slice-based and axial-coupled formulations, and employ a Gauss–Seidel sweep for the denoising step in the axial-coupled model. 
Under a weakly convex regularization assumption, we establish subsequential convergence of the proposed algorithm.
Experiments on synthetic and real zebrafish-heart data demonstrate that the proposed framework successfully recovers cellular structures from compressed measurements,  and provide practical insights into the comparative performance of commonly used denoisers within the PnP framework under the CS-LSM setup.
\end{abstract}

\begin{keywords}
Compressive Sensing, Light-Sheet Microscopy, Plug-and-Play, Image Reconstruction
\end{keywords}

\begin{MSCcodes}
68U10, 65K10, 65F22, 94A08, 92C55
\end{MSCcodes}

\section{Introduction}
Biological processes such as cardiac contraction and intracardiac flow evolve in three-dimensional (3D) space and over extremely short time scales~\cite{bers2002cardiac}.
Capturing these dynamics \textit{in vivo}  requires volumetric imaging methods that achieve cellular resolution over organ-scale fields of view while maintaining sufficiently high temporal resolution~\cite{chen2019diamond}. At the same time, light exposure must be carefully controlled to limit photobleaching and phototoxicity~\cite{zhang2022computational}, as these measurements are often performed in living specimens.

In fluorescence microscopy, volumetric imaging speed, spatial resolution, field of view, and excitation burden are constrained by fundamental trade-offs, thereby hindering their simultaneous achievement. Existing 3D imaging methods for beating hearts include confocal laser scanning microscopy (CLSM) \cite{Minsky1988AO,White1987JCB,liebling2005four}, two-photon microscopy (TPM) \cite{Denk1990Science,li2012intravital,mahou2014multicolor}, and light-sheet microscopy (LSM) \cite{Huisken2004Science,Santi2011JHC,mickoleit2014high,lee20164,taylor2019adaptive,zhang20234d}. CLSM and TPM offer strong optical sectioning, but their point- or line-scanning acquisition limits the rate at which full volumes can be recorded and may lead to substantial photobleaching, phototoxicity, or thermal load during prolonged high-speed imaging \cite{Icha2017Bioessays,Helmchen2005NatMethods}. In contrast, LSM illuminates only the imaging plane and detects fluorescence from an orthogonal direction, positioning it as an attractive approach for fast volumetric imaging with reduced light dose. 
Despite these advantages, conventional LSM struggles to capture dynamics that evolve on millisecond timescales across 3D volumes, and retrospective synchronization or prospective optical gating is often employed to compensate \cite{mickoleit2014high,lee20164,taylor2019adaptive,zhang20234d}. 
Approaches such as rapid beam steering \cite{voleti2019real} or multiplane recording~\cite{weber2023high} partially mitigate this limitation but typically rely on specialized optical instrumentation and detection hardware.
More fundamentally, detector bandwidth imposes hard limits on volumetric imaging performance, as higher acquisition rates come at the expense of spatial sampling, field of view, or signal quality.

This trade-off motivates coded and compressive acquisition strategies, and has recently driven the integration of LSM with compressive sensing (CS)  \cite{donoho2006compressed,candes2007sparsity} for more efficient volumetric encoding.
While such coded and compressive volumetric imaging approaches show clear promise for improving volumetric imaging speed and reducing the number of required measurements, achieving fast \textit{in vivo} volumetric imaging at cellular resolution remains challenging. 
For instance, a spatially modulated LSM recover volumes from patterned illumination using CS, but it relies on multiple sequential acquisitions, limiting volumetric speed \cite{calisesi2019spatially}.  Snapshot temporal compressive LSM can recover multiple frames from a single measurement, but it is typically designed for a single light-sheet plane rather than a full 3D volume \cite{wang2023snapshot}. In addition, light field microscopy enables single-shot volumetric capture by encoding angular information onto a single sensor; however, the associated trade-off between angular and spatial sampling often reduces effective resolution, introduces on-focal artifacts, or requires customized optical components and extensive training data for neural network-based reconstruction, making cellular-scale \textit{in vivo} imaging challenging \cite{saberigarakani2025volumetric,Wang2021VCDLFM,Wang2021Hybrid,wang2025kilohertz}. Taken together, these studies 
indicate that fast acquisition alone is insufficient; when measurements are strongly multiplexed and undersampled, effective reconstruction becomes a central component of the imaging pipeline.

Addressing the acquisition side of this pipeline, prior work by our team developed a CS-LSM platform for compressed light-sheet acquisition \cite{zhang2025instantaneous}. This paper focuses on the reconstruction problem; for details on hardware design, calibration, and acquisition protocol, we refer the reader to \cite{zhang2026compressive}.
In the CS-LSM system, axial scanning is synchronized with spatial light modulation via a digital micromirror device (DMD). Following the incoherent measurement framework of CS \cite{candes2007sparsity}, random binary masks are used to encode fluorescence from multiple depth planes into a single exposure, enabling volumetric imaging at rates of 200 volumes per second while maintaining cellular resolution \cite{zhang2025instantaneous}. It also reduces the number of recorded measurements and the associated storage requirements relative to non-compressed acquisition, thereby improving data efficiency in high-speed volumetric imaging~\cite{zhang2026compressive}. 

While this hardware platform enables high-speed compressed acquisition, the resulting measurements are highly multiplexed and undersampled, necessitating robust and efficient reconstruction algorithms to recover high-quality volumetric reconstructions.
The main objective of this paper is therefore to develop a flexible reconstruction framework tailored to this hardware system. 
Although motivated by the present CS-LSM platform, the resulting methodology may be applicable to a broader class of coded, multiplexed, and computational volumetric imaging settings. 

To this end, we build on a plug-and-play (PnP) framework \cite{venkatakrishnan2013plug} and solve the model using the alternating direction method of multipliers (ADMM) \cite{boyd2011distributed}, which we refer to as PnP-ADMM. We first formulate a slice-based reconstruction model, in which each slice is reconstructed independently. This design exploits the diagonal structure of the DMD masks, enabling efficient data-consistency updates via Woodbury-based inversion. To better preserve volumetric continuity, we subsequently introduce an axially coupled reconstruction model that enforces smoothness between adjacent slices along the $z$ direction. This axial coupling is designed to leverage inter-slice correlations within the reconstructed volume, rather than temporal correlations across different cardiac phases. In contrast to full 3D regularization or volumetric patch-based reconstruction methods \cite{persson2001total,maggioni2012nonlocal,jia2012four}, we incorporate axial coupling within the denoising step using a Gauss–Seidel sweep. This approach is computationally efficient and, importantly, enables us to establish convergence guarantees for the proposed algorithm.

The name \textit{plug-and-play} reflects the fact that one subproblem is formulated as a denoising step, allowing a wide range of denoisers to be incorporated without altering the overall reconstruction framework. This flexibility has also been leveraged in hyperspectral unmixing \cite{zhao2021plug}, limited-angle tomography~\cite{zhao2025limited}, and autonomous driving \cite{xiong2026prune2drive}. Here, we investigate the performance of classical denoisers, such as Tikhonov \cite{tikhonov1977solutions}, total variation (TV) \cite{rudin1992nonlinear,chambolle1997image,chambolle2004algorithm,goldstein2009split}, and block-matching and 3D filtering (BM3D) \cite{dabov2007image}, as well as deep learning (DL)-based denoisers, including the denoising convolutional neural network (DnCNN) \cite{zhang2017beyond}, the fast and flexible denoising network (FFDNet) \cite{zhang2018ffdnet}, and the deep residual U-Net denoiser (DRUNet) \cite{zhang2021plug}. Experiments on both synthetic and real zebrafish-heart data demonstrate the effectiveness of the proposed framework and provide practical insights into the strengths and limitations of these off‑the‑shelf denoisers.

For convex regularizers such as Tikhonov and TV, the convergence of PnP-ADMM follows directly from classical ADMM theories \cite{boyd2011distributed,eckstein2015understanding}. However, this guarantee does not readily extend to more advanced denoisers, such as BM3D and DL-based methods, for which an explicit underlying prior is typically unavailable, let alone one that can be verified to be convex. To address this gap, Chan et al.~\cite{chan2017plug} established convergence of PnP algorithms under the assumption that the denoiser is nonexpansive, while Ryu et al.~\cite{ryu2019plug} relaxed this requirement by imposing nonexpansiveness on the residual operator, at the cost of requiring strong convexity of the data-fidelity term, which is not satisfied in the CS-LSM setting. In this work, we establish subsequential convergence of the PnP-ADMM algorithm for the proposed axial-coupled model for a weakly convex regularization term. 

The main contributions of this paper are summarized as follows:
\begin{itemize}
  \item We introduce an axial-coupled reconstruction model that leverages inter-slice correlations along the $z$ direction, leading to improved reconstruction performance compared with slice-based model. 
  \item We tailor the PnP-ADMM framework to the CS-LSM setup through efficient algorithmic updates, including a Woodbury-based reformulation of the linear subproblem and a Gauss–Seidel update strategy for the axial-coupled model.
  \item We analyze the convergence properties of the proposed algorithm, establishing subsequential convergence under the assumption that the image prior is weakly convex.
  \item Experiments on synthetic and real data provide practical guidance for selecting off-the-shelf denoisers in the CS-LSM reconstruction problem.
\end{itemize}

The rest of this paper is organized as follows. Section~\ref{sec:method} introduces the CS-LSM forward model and the proposed PnP-ADMM reconstruction framework, including both slice-based and axial-coupled models. Section~\ref{sec:convergence} discusses convergence properties of the proposed framework. Section~\ref{sec:experiments} presents numerical results on synthetic and real datasets. Finally, Section~\ref{sec:conclusion} concludes the paper and outlines future work.

\section{Proposed Approaches}\label{sec:method}

In this section, we first introduce the CS-LSM forward model in Section~\ref{sect:forward}. We then present a slice-based reconstruction model in Section~\ref{sect:slice}, where a 3D volume is recovered through a collection of 2D subproblems. Next, in Section~\ref{sect:axial}, we extend this formulation to an axial-coupled model by enforcing inter-slice smoothness along the $z$ direction. Both models are solved within an ADMM framework. Notably, the regularization subproblem is interpreted as a denoising step,  reflecting the \textit{plug-and-play} nature of the framework.

\begin{figure*}
\centering
\includegraphics[width=0.95\columnwidth]{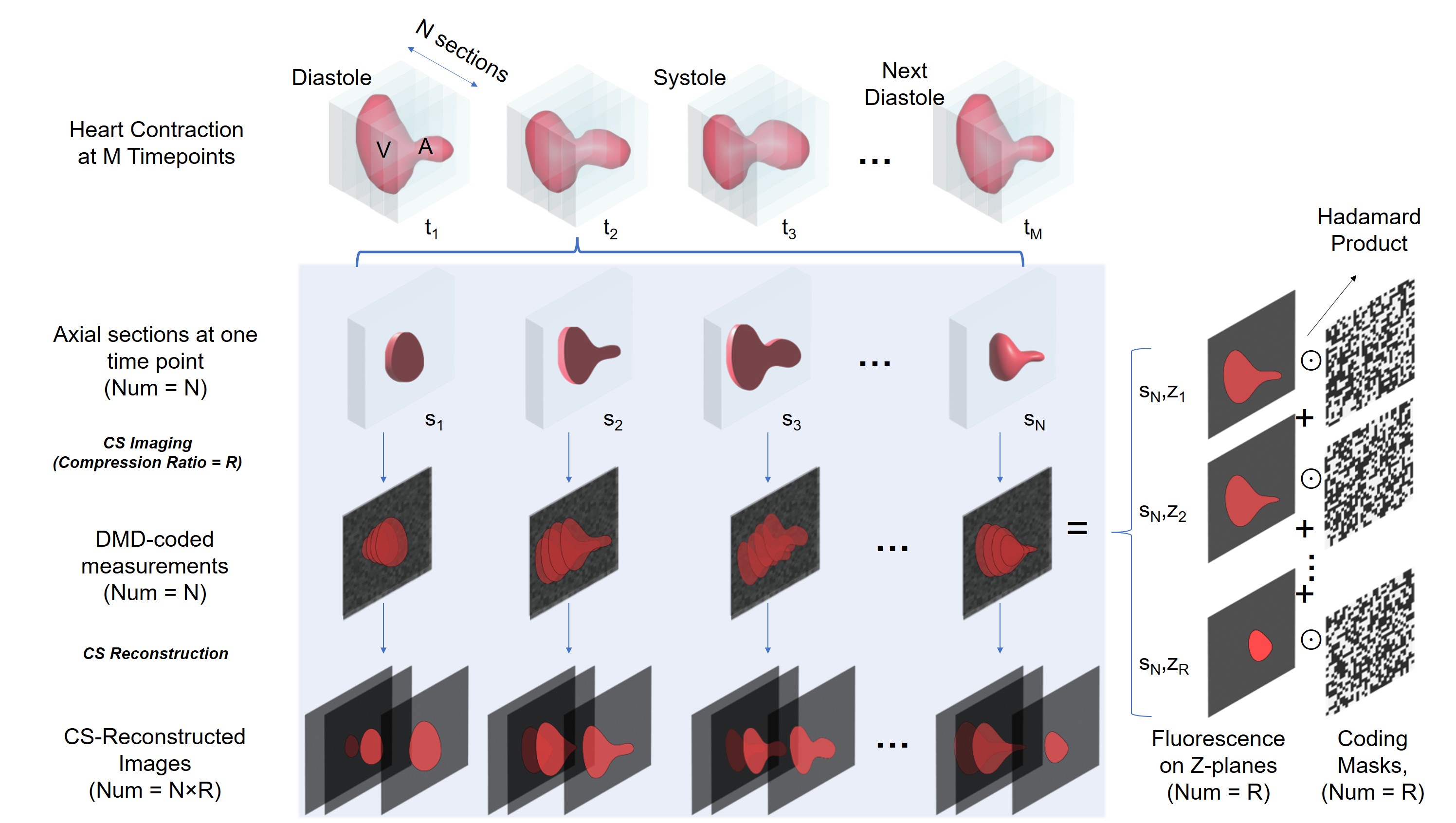}
\caption{Illustration of the CS-LSM image formation and reconstruction process. Cardiac contraction is observed over $M$ time points during one heartbeat. At one fixed time point, the 3D volume is sampled through $N$ compressed camera shots. In each shot, fluorescence signals from $R$ axial planes are encoded by distinct binary masks and summed into a single DMD-coded measurement. From the resulting $N$ compressed measurements, the CS reconstruction recovers $N\times R$ axial slices.}
\label{fig:forward_model}
\end{figure*}

\subsection{Forward model and notation}\label{sect:forward}

We adopt the CS-LSM acquisition model from \cite{zhang2025instantaneous} and focus on the reconstruction problem. Specifically, we obtain compressed image data through a set of binary masks, as illustrated in Fig.~\ref{fig:forward_model}. The top row of Fig.~\ref{fig:forward_model} shows a sequence of 3D cardiac volumes acquired at $M$ distinct time points during one heartbeat, indexed by $t_1,t_2,\dots,t_M$. In this paper, we focus on reconstruction at a fixed time point, recovering one single 3D volume  at a time. Extending this across time yields a 4D sequence, while explicit temporal coupling is left for future work. At a time point, the underlying 3D volume is acquired through $N$ compressed camera shots over one scanning period. As shown in Fig.~\ref{fig:forward_model}, each shot corresponds to one axial block of the volume, and each axial block contains $R$ axial slices along the $z$ direction, where $R$ is the compression ratio, namely, the number of axial slices encoded within a single camera exposure. Consequently, the full reconstructed volume contains $S:=NR$ axial slices in total. We denote these unknown slices by $\{\bm v_n\}_{n=1}^{S}$, where each $\bm v_n\in\mathbb R^{p}$ represents one vectorized 2D slice\footnote{This vectorization is introduced only for notational convenience; in implementation, each slice is kept in its original 2D form.} and $p$ denotes the total number of pixels in each 2D slice. 

In the CS acquisition process, the DMD applies $R$ binary masks sequentially during each camera exposure. Within the $j$th shot ($j=1, \cdots, N$), the $R$ axial slices in the corresponding section group are modulated by distinct binary masks through element-wise (Hadamard) products and then summed on the detector, producing a single compressed 2D measurement, which we vectorize as $\bm b_j\in\mathbb R^{p}$. Let $\bm\phi_r\in\mathbb R^{p\times p}$ denote the diagonal masking operator for the $r$th slice ($r=1, \cdots, R)$, so that the Hadamard modulation can be written as a matrix-vector product. Then the $j$th compressed measurement can be mathematically modeled as
\begin{equation}\label{eq:forward_model}
\bm b_j=\sum_{r=1}^{R}\bm\phi_r \bm v_{(j-1)R+r}+\bm\eta_j,\qquad j=1,\dots,N,
\end{equation}
where $\bm\eta_j$ denotes measurement noise. 
Thus, the reconstruction task is to recover the full stack of $S$ axial slices $\{\bm v_n\}_{n=1}^{S}$ from only $N$ compressed measurements stored in $\{\bm b_j\}_{j=1}^{N}$, which is a highly underdetermined inverse problem. Moreover, since adjacent axial slices in the volume are often strongly correlated, it is beneficial to exploit such inter-slice structure during reconstruction, which motivates the axial-coupled model introduced in Section~\ref{sect:axial}.

\subsection{Slice-based reconstruction}\label{sect:slice}

We begin with a slice-based model, in which each axial slice is regularized individually. Assuming that the measurement noise can be well approximated as additive Gaussian\footnote{The Gaussian noise model is a common approximation that is well-justified at sufficiently high photon counts, which is consistent with our hardware setting.}, we employ a least-squares data-fidelity term and consider the following optimization problem, 
\begin{equation}\label{eq:model-slice}
\min_{\{\bm v_n\}_{n=1}^{S}}
\frac{1}{2}\sum_{j=1}^{N}\left\|\bm b_j-\sum_{r=1}^{R}\bm\phi_r \bm v_{(j-1)R+r}\right\|^2
+\lambda\sum_{n=1}^{S}\psi(\bm v_n),
\end{equation}
where $\psi(\cdot)$ denotes a slice-wise regularization term and $\lambda>0$ is a weighting parameter.

To solve \eqref{eq:model-slice}, we introduce auxiliary variables $\{\bm u_n\}_{n=1}^{S}$ and rewrite the model as
\begin{equation}\label{eq:model-slice-split}
\begin{aligned}
\min_{\{\bm v_n,\bm u_n\}}
\quad &
\frac{1}{2}\sum_{j=1}^{N}\left\|\bm b_j-\sum_{r=1}^{R}\bm\phi_r \bm v_{(j-1)R+r}\right\|^2
+\lambda\sum_{n=1}^{S}\psi(\bm u_n) \\
\text{subject to}\quad & \bm v_n=\bm u_n,\qquad n=1,\dots,S.
\end{aligned}
\end{equation}
The corresponding augmented Lagrangian is 
\begin{align}
\mathcal{L}_{\rho}(\{\bm u_n\},\{\bm v_n\},\{\bm d_n\})
:=&\;
\frac{1}{2}\sum_{j=1}^{N}\left\|\bm b_j-\sum_{r=1}^{R}\bm\phi_r \bm v_{(j-1)R+r}\right\|^2\notag \\
&+\sum_{n=1}^{S}\left(
\lambda\psi(\bm u_n)
+\frac{\rho}{2}\|\bm v_n-\bm u_n+\bm d_n\|^2
-\frac{\rho}{2}\|\bm d_n\|^2
\right),
\end{align}
where $\{\bm d_n\}_{n=1}^{S}$ are scaled dual variables and $\rho>0$ is the penalty parameter. 
The ADMM iterations are given by
\begin{equation}\label{eq:ADMM}
\left\{
\begin{aligned}
\{\bm u_n^{k+1}\}
&=
\arg\min_{\{\bm u_n\}}
\mathcal{L}_{\rho}(\{\bm u_n\},\{\bm v_n^k\},\{\bm d_n^k\}),\\
\{\bm v_n^{k+1}\}
&=
\arg\min_{\{\bm v_n\}}
\mathcal{L}_{\rho}(\{\bm u_n^{k+1}\},\{\bm v_n\},\{\bm d_n^k\}),\\
\bm d_n^{k+1}
&=
\bm d_n^k+\bm v_n^{k+1}-\bm u_n^{k+1},
\end{aligned}
\right.
\end{equation}
where $n=1,\dots,S$ and the superscript $k$ counts the iteration number.

The $\bm u$-subproblem decouples across slices:
\begin{equation}\label{eq:u-sub}
\bm u_n^{k+1}
=
\arg\min_{\bm z}\;
\lambda\psi(\bm z)+\frac{\rho}{2}\|\bm z-(\bm v_n^{k}+\bm d_n^k)\|^2,
\qquad n=1,\dots,S.
\end{equation}
From the PnP perspective, this subproblem is interpreted as a denoising step applied to the input
$\bm v_n^{k}+\bm d_n^k$. Many existing denoising approaches, or denoisers for short, can be formulated as
\begin{equation}\label{eq:denoiser}
    \mathcal K_\sigma(\bm f) := \arg\min_{\bm z} \psi(\bm z)+ \frac 1{2\sigma^2}\|\bm z-\bm f\|^2,
\end{equation}
where $\sigma>0$ is an input noise-level parameter common to methods such as BM3D, DnCNN, FFDNet, and DRUNet, and typically corresponds to the standard deviation of the noise. 
Dividing \eqref{eq:u-sub} by $\rho$ and comparing with the denoising formulation \eqref{eq:denoiser}, we identify the denoiser noise level as
$\sigma_\lambda=\sqrt{\lambda/\rho}$ and adopt the following update:
\begin{equation}\label{eq:u-pnp-slice}
\bm u_n^{k+1}=\mathcal K_{\sigma_\lambda}(\bm v_n^{k}+\bm d_n^k),
\end{equation}
where $\mathcal K_{\sigma_\lambda}$ denotes the selected denoiser with its internal noise level $\sigma_\lambda$ controlled by $\lambda.$
 When $\mathcal K_{\sigma_\lambda}$ is induced by an explicit regularizer, \eqref{eq:u-sub} reduces to a proximal update. As an example, Tikhonov regularization with $\psi(\bm z)=\|\bm z\|^2$ yields  
\begin{equation}\label{eq:tikhonov-slice}
\bm u_n^{k+1}
=
\frac{\rho(\bm v_n^{k}+\bm d_n^k)}{2\lambda+\rho}.
\end{equation}

For the $\bm v$-subproblem, the variables associated with each compressed shot can be updated shotwise. Fix $j\in\{1,\dots,N\}$ and define
\[
\bm c_r^{(j),k+1}
:=
\bm u_{(j-1)R+r}^{k+1}
-
\bm d_{(j-1)R+r}^{k},
\qquad r=1,\dots,R.
\]
Let
\[
\bm v^{(j)}
:=
[\bm v_{(j-1)R+1}^\top,\dots,\bm v_{jR}^\top]^\top
\in\mathbb R^{pR},
\qquad
\bm c^{(j),k+1}
:=
[(\bm c_1^{(j),k+1})^\top,\dots,(\bm c_R^{(j),k+1})^\top]^\top
\in\mathbb R^{pR},
\]
and
\[
\Phi:=[\bm\phi_1,\dots,\bm\phi_R]\in\mathbb R^{p\times pR}.
\]
Then the corresponding subproblem becomes
\begin{equation}\label{eq:v-sub-1}
\min_{\bm v^{(j)}}\;
\frac{1}{2}\|\bm b_j-\Phi \bm v^{(j)}\|^2+\frac{\rho}{2}\|\bm v^{(j)}-\bm c^{(j),k+1}\|^2.
\end{equation}
Its closed-form solution is
\begin{equation}\label{eq:v-update}
\bm v^{(j),k+1}
=
(\Phi^{\top}\Phi+\rho I_{pR})^{-1}
(\Phi^{\top}\bm b_j+\rho\bm c^{(j),k+1}).
\end{equation}

The matrix inverse in \eqref{eq:v-update} can be simplified by the Woodbury matrix identity
\begin{equation}\label{Woodbury}
(I+UV)^{-1}=I-U(I+VU)^{-1}V.
\end{equation} 
Applying \eqref{Woodbury} yields the equivalent update
\begin{equation}\label{eq:v-update-fast}
\bm v^{(j),k+1}
=
\bm c^{(j),k+1}
+\Phi^{\top}(\rho I_p+\Phi\Phi^{\top})^{-1}
(\bm b_j-\Phi\bm c^{(j),k+1}).
\end{equation}
Equation \eqref{eq:v-update-fast} is computationally attractive because it replaces the inversion of the $pR\times pR$ matrix in \eqref{eq:v-update} by the inversion of the $p\times p$ matrix $\rho I_p+\Phi\Phi^{\top}$, which has a simple structure. Indeed, since each $\bm\phi_r$ is diagonal, then the product $\bm\phi_r\bm\phi_r^{\top}$ is diagonal and
$
\Phi\Phi^{\top}=\sum_{r=1}^R \bm\phi_r\bm\phi_r^{\top}
$
 is diagonal as well. Therefore, $(\rho I_p+\Phi\Phi^{\top})^{-1}$ can be computed element-wise. More specifically, if $\phi_r(i)$ denotes the $i$th diagonal entry of $\bm\phi_r$, then
\begin{equation}\label{eq:phi-diag}
[\Phi\Phi^{\top}]_{ii}=\sum_{r=1}^R \phi_r(i)^2.
\end{equation}
For binary masks, $\phi_r(i)^2=\phi_r(i)$, so the quantity in \eqref{eq:phi-diag} is simply the number of masks that are active at pixel $i$. Therefore, the data-consistency step in \eqref{eq:v-update-fast} does not require any matrix inversion and can instead be implemented using inexpensive pointwise operations. 
Specifically, for a fixed shot $j$, \eqref{eq:v-update-fast} admits a pixel-wise interpretation: for each detector pixel location $i$, define the $R$-dimensional vectors
\[
\bm m(i)
:=
[\phi_1(i),\dots,\phi_R(i)]^{\top}\in\mathbb R^R,
\qquad
\widehat{\bm c}^{(j),k+1}(i)
:=
[c_1^{(j),k+1}(i),\dots,c_R^{(j),k+1}(i)]^{\top}\in\mathbb R^R,
\]
and let
\[
\widehat{\bm v}^{(j),k+1}(i)
:=
[v_{(j-1)R+1}^{k+1}(i),\dots,v_{jR}^{k+1}(i)]^{\top}\in\mathbb R^R,
\]
where $\widehat{\bm c}^{(j),k+1}(i)$ and $\widehat{\bm v}^{(j),k+1}(i)$ collect the current and updated values of the $R$ slices at pixel location $i$ within the $j$th shot, respectively. Then \eqref{eq:v-update-fast} can be written as
\begin{equation}\label{eq:v-update-pixel}
\widehat{\bm v}^{(j),k+1}(i)
=
\widehat{\bm c}^{(j),k+1}(i)
+
\bm m(i)
\frac{
b_j(i)-\bm m(i)^{\top}\widehat{\bm c}^{(j),k+1}(i)
}{
\rho+\|\bm m(i)\|_2^2
}, \quad i=1, \cdots, p.
\end{equation}
This expression shows that the data-consistency step decouples across spatial pixels: different detector pixels can be updated independently, while the coupling occurs only among the $R$ slices sharing the same detector location. In other words, \eqref{eq:v-update-pixel} is a residual correction along the local mask direction $\bm m(i)$, scaled by the normalization factor $\rho+\|\bm m(i)\|_2^2$.

\subsection{Axial-coupled reconstruction}\label{sect:axial}

While the slice-based model treats each axial slice independently, adjacent slices in a 3D volume are often strongly correlated. Ignoring this inter-slice structure may lead to discontinuities or inconsistent reconstructions along the $z$ direction. A natural remedy is to apply full 3D regularization directly to the reconstructed volume, such as 3D TV \cite{persson2001total}, BM4D \cite{maggioni2012nonlocal} (a volumetric extension of BM3D), or temporal non-local means \cite{jia2012four}. 
We incorporate a quadratic axial coupling term that penalizes differences between neighboring slices.  This pairwise coupling admits an efficient Gauss--Seidel (GS) update strategy for the denoising step, for which we establish subsequential convergence in Section \ref{sec:convergence}. Concretely, the axial-coupled model reads
\begin{equation}\label{eq:model-axial}
\min_{\{\bm v_n\}_{n=1}^{S}}
\frac{1}{2}\sum_{j=1}^{N}\left\|\bm b_j-\sum_{r=1}^{R}\bm\phi_r \bm v_{(j-1)R+r}\right\|^2
+\lambda\sum_{n=1}^{S}\psi(\bm v_n)
+\frac{\gamma}{2}\sum_{n=1}^{S}\|\bm v_n-\bm v_{n-1}\|^2,
\end{equation}
where $\gamma>0$ controls the strength of axial coupling. For notational convenience, we impose periodic boundary conditions, $\bm v_0=\bm v_S$.

As in the slice-based case, we introduce auxiliary variables $\{\bm u_n\}_{n=1}^{S}$ and rewrite \eqref{eq:model-axial} as
\begin{equation}\label{eq:model-axial-split}
\begin{aligned}
\min_{\{\bm v_n,\bm u_n\}}
\quad &
\frac{1}{2}\sum_{j=1}^{N}\left\|\bm b_j-\sum_{r=1}^{R}\bm\phi_r \bm v_{(j-1)R+r}\right\|^2
+\lambda\sum_{n=1}^{S}\psi(\bm u_n)
+\frac{\gamma}{2}\sum_{n=1}^{S}\|\bm u_n-\bm u_{n-1}\|^2 \\
\text{subject to}\quad & \bm v_n=\bm u_n,\qquad n=1,\dots,S.
\end{aligned}
\end{equation}
The corresponding augmented Lagrangian is
\begin{align}
\mathcal{L}_{\rho}(\{\bm u_n\},\{\bm v_n\},\{\bm d_n\})
:=&\;
\frac{1}{2}\sum_{j=1}^{N}\left\|\bm b_j-\sum_{r=1}^{R}\bm\phi_r \bm v_{(j-1)R+r}\right\|^2 \notag\\
+\sum_{n=1}^{S}&\left(
\lambda\psi(\bm u_n)
+\frac{\rho}{2}\|\bm v_n-\bm u_n+\bm d_n\|^2
-\frac{\rho}{2}\|\bm d_n\|^2
\right)
+\frac{\gamma}{2}\sum_{n=1}^{S}\|\bm u_n-\bm u_{n-1}\|^2,
\label{eq:aug-axial}
\end{align}
where $\{\bm d_n\}_{n=1}^{S}$ are scaled dual variables and $\rho>0$ is the penalty parameter.

Following the ADMM scheme \eqref{eq:ADMM}, we first update the auxiliary variables $\{\bm u_n\}_{n=1}^{S}$ with $\{\bm v_n^k\}$ and $\{\bm d_n^k\}$ fixed. Due to the axial coupling term, the $\bm u$-subproblem no longer decouples across slices:
\begin{equation}\label{eq:u-sub-global-axial}
\{\bm u^{k+1}_n\}_{n=1}^{S}=\arg\min_{\{\bm z_n\}_{n=1}^{S}}
\lambda\sum_{n=1}^{S}\psi(\bm z_n)
+\frac{\rho}{2}\sum_{n=1}^{S}\|\bm z_n-(\bm v_n^{k}+\bm d_n^k)\|^2
+\frac{\gamma}{2}\sum_{n=1}^{S}\|\bm z_n-\bm z_{n-1}\|^2,
\end{equation}
that is, each $\bm u_n$ depends on its neighboring slices $\bm u_{n-1}$ and $\bm u_{n+1}.$
We adopt a GS strategy, updating one slice at a time in a cyclic sweep with periodic boundary conditions, i.e., $\bm u_0=\bm u_S$ and $\bm u_{S+1}=\bm u_1$, where each slice is updated with its two neighbors fixed at their most recently computed values. More precisely, for $n=1,\dots,S$, define 
\begin{equation}\label{eq:u-periodic}
    \bm u_{n,-}^{k}
:=
\begin{cases}
\bm u_{n-1}^{k+1}, & n=2,\dots,S,\\
\bm u_S^{k}, & n=1,
\end{cases}
\qquad
\bm u_{n,+}^{k}
:=
\begin{cases}
\bm u_{n+1}^{k}, & n=1,\dots,S-1,\\
\bm u_1^{k+1}, & n=S.
\end{cases}
\end{equation}
Then each slice is updated by solving
\begin{equation}
\bm u_n^{k+1}
=
\arg\min_{\bm z}\;
\lambda\psi(\bm z)
+\frac{\rho}{2}\|\bm z-(\bm v_n^{k}+\bm d_n^k)\|^2 
+\frac{\gamma}{2}\|\bm z-\bm u_{n,-}^{k}\|^2
+\frac{\gamma}{2}\|\bm z-\bm u_{n,+}^{k}\|^2 .
\label{eq:axial_u_sub}
\end{equation}

Completing the square and defining 
\begin{equation}\label{eq:g-update}
\bm g_n^k
=
\frac{\rho(\bm v_n^{k}+\bm d_n^k)+\gamma(\bm u_{n,-}^{k}+\bm u_{n,+}^{k})}{\rho+2\gamma}\in\mathbb R^p,
\end{equation}
the subproblem \eqref{eq:axial_u_sub} reduces to
\begin{equation}\label{eq:u-update-axial}
\bm u_n^{k+1}
=
\arg\min_{\bm z}\;
\lambda\psi(\bm z)+\frac{\rho+2\gamma}{2}\|\bm z-\bm g_n^k\|^2.
\end{equation}
From the PnP perspective, \eqref{eq:u-update-axial} can be interpreted as a denoising step applied to the axial-coupled quantity $\bm g_n^k$. Normalizing \eqref{eq:u-update-axial} by $\rho+2\gamma$ and comparing with \eqref{eq:denoiser} leads to the denoising update:
\begin{equation}\label{eq:u-pnp-axial}
\bm u_n^{k+1}
=
\mathcal K_{\sigma_\lambda}(\bm g_n^k),
\end{equation}
with $\sigma_\lambda=\sqrt{\frac{\lambda}{\rho+2\gamma}}$.
When $\mathcal K_{\sigma_\lambda}$ is induced by an explicit regularizer, for example, $\psi(\bm z)=\|\bm z\|^2$, the Tikhonov update can be written as the closed-form update
\begin{equation}\label{eq:tikhonov-axial}
\bm u_n^{k+1}
=
\frac{
\rho(\bm v_n^{k}+\bm d_n^k)
+\gamma(\bm u_{n,-}^{k}+\bm u_{n,+}^{k})
}{2\lambda+\rho+2\gamma}.
\end{equation}

With $\{\bm u_n^{k+1}\}_{n=1}^{S}$ fixed, the $\bm v$- and $\bm d$-updates are the same as in the slice-based model. We summarize the axial-coupled PnP-ADMM scheme in Algorithm~\ref{alg:cs-lsm-pnp}. When $\gamma=0$, the algorithm reduces to the slice-based model.

\begin{algorithm}
\caption{Axial-Coupled CS-LSM Reconstruction via PnP-ADMM}
\label{alg:cs-lsm-pnp}
\begin{algorithmic}
\STATE{Input: measurements $\{\bm b_j\}_{j=1}^{N}$, masks $\{\bm\phi_r\}_{r=1}^{R}$, parameters $\lambda,\rho,\gamma>0$, and denoiser $\mathcal K_{\sigma_\lambda}$}
\STATE{Define $S:=NR$ and initialize $\{\bm u_n^0,\bm v_n^0,\bm d_n^0\}_{n=1}^{S}$} 
\STATE{Set $k:=0$}
\WHILE{the stopping criterion is not satisfied}
    \FOR{$n=1,\dots,S$}
        \STATE{Compute $\bm g_n^k$ by \eqref{eq:g-update}}
        \STATE{Update $\bm u_n^{k+1}$ by \eqref{eq:u-pnp-axial}}
    \ENDFOR
    \FOR{$j=1,\dots,N$}
        \STATE{Collect the slice indices $n=(j-1)R+1,\dots,jR$}
        \STATE{Update $\{\bm v_n^{k+1}\}_{n=(j-1)R+1}^{jR}$ pixelwise via \eqref{eq:v-update-pixel}}
    \ENDFOR
    \FOR{$n=1,\dots,S$}
        \STATE{Update $\bm d_n^{k+1}=\bm d_n^k+\bm v_n^{k+1}-\bm u_n^{k+1}$}
    \ENDFOR
    \STATE{Set $k:=k+1$}
\ENDWHILE
\RETURN{$\{\bm v_n^k\}_{n=1}^{S}$}
\end{algorithmic}
\end{algorithm}

\section{Convergence Analysis}\label{sec:convergence}

In this section, we study convergence properties of the proposed Algorithm~\ref{alg:cs-lsm-pnp}. To facilitate the analysis, we recast the summation-based model \eqref{eq:model-axial} into an equivalent matrix-vector formulation and introduce two auxiliary functions that express the ADMM updates in a form amenable to convergence analysis. 

For clarity, we summarize the notation used throughout this section. Let $S:=NR$ be the total number of axial slices. The indices  $n=1,\dots,S$, $j=1,\dots,N$, and $r=1,\dots,R$ refer to individual axial slices, compressed shots, and local slices within a shot, respectively, with $n=(j-1)R+r$.  We stack the slice variables into full vectors
\[
\bm u := [\bm u_1^\top,\ldots,\bm u_S^\top]^\top \in \mathbb{R}^{pS},\qquad
\bm v := [\bm v_1^\top,\ldots,\bm v_S^\top]^\top \in \mathbb{R}^{pS},\qquad
\bm d := [\bm d_1^\top,\ldots,\bm d_S^\top]^\top \in \mathbb{R}^{pS},
\]
and write $\bm u^k$, $\bm v^k$, and $\bm d^k$ for these stacked vectors at the $k$th outer iteration. The $n$th slice variables at iteration $k$ are denoted 
$\bm u_n^k$, $\bm v_n^k$, and $\bm d_n^k$. 
For each shot $j$, the corresponding $R$ slice variables are collected into the shot-wise vector $\bm v^{(j)}$, and we write $\bm v^{(j),k}$ 
for its value at the $k$th iteration.
In the cyclic GS sweep, each slice variable $\bm u_n$ is treated as one coordinate block, which we refer to as a slice block; and $\bm u_{n,-}^k$ and $\bm u_{n,+}^k$ denote the previous and next neighboring slices used when updating $\bm u_n$, as defined in \eqref{eq:u-periodic}.
The notation $\bm U^{k,n}$ denotes the intermediate stacked vector after the first $n$ slice blocks have been updated in the $k$th GS sweep. Both $\bm v^{(j)}$ and $\bm U^{k,n}$ are defined explicitly in \eqref{eq:v^j} and \eqref{eq:U}, respectively.
If $\bm u^\star$ is an accumulation point,  $\bm u_n^\star$ denotes its $n$th slice. All notation is summarized in Table \ref{tab:notation_summary}.

\begin{table}[t]
\centering
\caption{Summary of notation used in the convergence analysis.}
\label{tab:notation_summary}
\begin{tabular}{ll}
\hline
Notation & Meaning \\
\hline
$n$
& Global axial-slice index, $n=1,\dots,S$ \\
$j$
& Compressed-shot index, $j=1,\dots,N$ \\
$r$
& Local slice index within the $j$th compressed shot, $r=1,\dots,R$ \\
$\bm u_n^k,\bm v_n^k,\bm d_n^k$
& The $n$th slice variable at the $k$th outer iteration \\
$\bm u^k,\bm v^k,\bm d^k$
& Stacked vectors at the $k$th outer iteration \\
$\bm v^{(j)}$
& Shot-wise vector collecting the $R$ slices associated with the $j$th shot \\
$\bm v^{(j),k}$
& Value of the shot-wise vector $\bm v^{(j)}$ at the $k$th iteration \\
$\bm u_{n,-}^k,\bm u_{n,+}^k$
& Previous and next neighboring slice used when updating the $n$th slice \\
$\bm U^{k,n}$
& Stacked vector after the first $n$ slices have been updated in the $k$th GS sweep \\
$\bm u_n^\star$
& The $n$th slice of an accumulation point $\bm u^\star$ \\
\hline
\end{tabular}
\end{table}

Recall that for each compressed shot $j=1,\dots,N$, we collect the corresponding $R$ slice variables into the shot-wise vector 
\begin{equation}\label{eq:v^j}
\bm v^{(j)} := [\bm v_{(j-1)R+1}^\top,\ldots,\bm v_{jR}^\top]^\top \in\mathbb R^{pR},
\end{equation}
and define the shot-wise measurement matrix
$
\Phi := [\bm\phi_1,\ldots,\bm\phi_R]\in\mathbb R^{p\times pR}.
$
Since the same set of masks is applied in every shot, the global forward operator takes the block-diagonal form
\[
A:=I_N\otimes \Phi\in\mathbb R^{pN\times pS},
\]
where $\otimes$ denotes the Kronecker product. Stacking all compressed measurements gives
\[
\bm b := [\bm b_1^\top,\ldots,\bm b_N^\top]^\top \in\mathbb R^{pN}.
\]
The summation in the data term of \eqref{eq:model-axial} can then be equivalently written in matrix-vector form as
\[
\frac12\|A\bm v-\bm b\|_2^2
=
\frac12\sum_{j=1}^{N}\left\|\bm b_j-\sum_{r=1}^{R}\bm\phi_r\bm v_{(j-1)R+r}\right\|_2^2.
\]

For the axial coupling term, define the periodic first-order difference operator 
\[
D:\mathbb R^{pS}\to\mathbb R^{pS},\qquad
(D\bm u)_n:=\bm u_n-\bm u_{n-1},\qquad \bm u_0:=\bm u_S.
\]
Then $\|D\bm u\|_2^2=\sum_{n=1}^{S}\|\bm u_n-\bm u_{n-1}\|_2^2.$
We adopt periodic boundary conditions here for analytical convenience; alternative nonperiodic boundary treatments can be handled similarly.

The slice-based and axial-coupled models can be written in the following unified formulation with $\gamma\ge 0$:
\begin{equation}\label{eq:convex_unified_stacked}
\min_{\bm v\in\mathbb{R}^{pS}}\;
F_{\gamma}(\bm v)
:=
\frac12\|A\bm v-\bm b\|_2^2
+\lambda\sum_{n=1}^{S}\psi(\bm v_n)
+\frac{\gamma}{2}\|D\bm v\|_2^2 .
\end{equation}
When $\gamma=0$, \eqref{eq:convex_unified_stacked} reduces to the slice-based model; when $\gamma>0$, it corresponds to the axial-coupled model. 

Introduce an auxiliary variable $\bm u$ and impose the constraint $\bm v=\bm u$. Then \eqref{eq:convex_unified_stacked} can be written as
\begin{equation}\label{eq:admm_form}
\min_{\bm v,\bm u}\; f(\bm v)+g(\bm u)
\quad\text{s.t.}\quad
\bm v-\bm u=\bm 0,
\end{equation}
where
\[
f(\bm v):=\frac12\|A\bm v-\bm b\|_2^2,
\qquad
g(\bm u):=
\lambda\sum_{n=1}^{S}\psi(\bm u_n)
+\frac{\gamma}{2}\|D\bm u\|_2^2 .
\]

Using the scaled dual variable $\bm d$, the augmented Lagrangian can be written as
\begin{equation}\label{eq:aug_lag_analysis}
\mathcal{L}_\rho(\bm u,\bm v,\bm d)
=
f(\bm v)+g(\bm u)
+\frac{\rho}{2}\|\bm v-\bm u+\bm d\|_2^2
-\frac{\rho}{2}\|\bm d\|_2^2 ,
\end{equation}
or equivalently,
\begin{equation}\label{eq:aug_lag_expanded}
\mathcal{L}_\rho(\bm u,\bm v,\bm d)
=
f(\bm v)+g(\bm u)
+\rho\langle \bm v-\bm u,\bm d\rangle
+\frac{\rho}{2}\|\bm v-\bm u\|_2^2 .
\end{equation}

Throughout this section, $\{(\bm u^k,\bm v^k,\bm d^k)\}_{k\ge0}$ denotes the sequence generated by Algorithm~\ref{alg:cs-lsm-pnp}. 

We now proceed with the convergence analysis. We first review relevant concepts from variational analysis \cite{rockafellar1998variational}, then establish a sufficient decrease property for the augmented Lagrangian in Theorem \ref{thm:sufficient_decrease}. Building on this and under a standard boundedness assumption, Corollary \ref{cor:asymptotic_regularity} establishes  asymptotic regularity of the iterates, and Theorem \ref{thm:cluster_optimality} shows that every accumulation point of the generated sequence is a stationary point of \eqref{eq:convex_unified_stacked}.

Recall that an extended real-valued
function $h:\mathbb{R}^m\to(-\infty,+\infty]$ is called proper if
$\operatorname{dom} h:=\{x\in\mathbb{R}^m: h(x)<+\infty\}$ is nonempty and
$h(x)>-\infty$ for all $x$. Additionally, a proper function $h$ is said to be closed if it is lower semi-continuous.
For a proper closed convex function $h$, its subdifferential at
$x\in\operatorname{dom}h$ is defined by 
$
\partial h(x)
    :=
    \{s\in\mathbb{R}^m:
    h(y)\ge h(x)+\langle s,y-x\rangle,\ \forall y\in\mathbb{R}^m\}.
$ When $h$ is differentiable, we write $\nabla h$ for its gradient, in which case $\partial h(x) =\{\nabla h(x)\}.$
A function $h:\mathbb R^m\to(-\infty,+\infty]$ is called
$\kappa$-weakly convex if
$
h(\cdot)+\frac{\kappa}{2}\|\cdot\|_2^2
$
is convex for a constant $\kappa\ge0$, and its 
 subdifferential is defined as
$
\partial h(x)
:=
\partial\left(h+\frac{\kappa}{2}\|\cdot\|_2^2\right)(x)-\kappa x.
$

\begin{assumption}\label{ass:psi}
$\psi:\mathbb R^p\to(-\infty,+\infty]$ is proper, closed, and $\kappa$-weakly convex. 
\end{assumption}

The Lipschitz constant of $\nabla f$ is given by 
$L_f:=\|A\|_2^2,$
which equals the largest eigenvalue of $A^\top A$. With these ingredients in place, the following theorem establishes a sufficient decrease property for the augmented Lagrangian along the iterates of  Algorithm \ref{alg:cs-lsm-pnp}.

\begin{theorem}\label{thm:sufficient_decrease}
Suppose Assumption~\ref{ass:psi} holds and the penalty parameter satisfies
$\rho>2L_f$ and $\rho+2\gamma>\kappa\lambda$. 
Then for all $k\ge 1$, 
\begin{align}
&\mathcal{L}_\rho(\bm u^{k+1},\bm v^{k+1},\bm d^{k+1})
-
\mathcal{L}_\rho(\bm u^k,\bm v^k,\bm d^k)
\notag\\
&\le
-\frac{\rho+2\gamma-\kappa\lambda}{2}\|\bm u^{k+1}-\bm u^k\|_2^2
-\frac{\rho}{2}\|\bm v^{k+1}-\bm v^k\|_2^2
-\left(\frac12-\frac{L_f}{\rho}\right)
\|A(\bm v^{k+1}-\bm v^k)\|_2^2 .
\label{eq:sufficient_decrease}
\end{align}
\end{theorem}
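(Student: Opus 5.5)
The plan is to split the change in $\mathcal L_\rho$ over one outer iteration into the three ADMM steps, bounding each separately:
\begin{align*}
&\mathcal L_\rho(\bm u^{k+1},\bm v^{k+1},\bm d^{k+1})-\mathcal L_\rho(\bm u^k,\bm v^k,\bm d^k)\\
&=\big[\mathcal L_\rho(\bm u^{k+1},\bm v^k,\bm d^k)-\mathcal L_\rho(\bm u^k,\bm v^k,\bm d^k)\big]
+\big[\mathcal L_\rho(\bm u^{k+1},\bm v^{k+1},\bm d^k)-\mathcal L_\rho(\bm u^{k+1},\bm v^k,\bm d^k)\big]\\
&\quad+\big[\mathcal L_\rho(\bm u^{k+1},\bm v^{k+1},\bm d^{k+1})-\mathcal L_\rho(\bm u^{k+1},\bm v^{k+1},\bm d^k)\big].
\end{align*}
The $\bm u$- and $\bm v$-steps will give decrease. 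The dual step will increase $\mathcal L_\rho$, and I will absorb that increase into the $\bm v$-decrease.

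\textbf{$\bm u$-step (GS sweep).} I track the intermediate vectors $\bm U^{k,n}$, with $\bm U^{k,0}=\bm u^k$ and $\bm U^{k,S}=\bm u^{k+1}$. Fix all blocks except $\bm u_n$. Then $\mathcal L_\rho(\cdot,\bm v^k,\bm d^k)$, as a function of $\bm u_n=\bm z$, equals the objective in \eqref{eq:axial_u_sub} plus a constant. The reason is that the only terms of $\frac{\gamma}{2}\|D\bm u\|^2$ involving $\bm u_n$ are $\frac\gamma2\|\bm z-\bm u_{n,-}^k\|^2+\frac\gamma2\|\bm z-\bm u_{n,+}^k\|^2$, evaluated at the most recent neighbor values as in \eqref{eq:u-periodic}. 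This needs $S\ge3$, so that the two neighbors are distinct blocks; I will note this as an implicit assumption.

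By Assumption~\ref{ass:psi}, this block objective is $(\rho+2\gamma-\kappa\lambda)$-strongly convex, and it is proper and closed. Its minimizer $\bm u_n^{k+1}$ therefore satisfies the quadratic-growth inequality
\[
\mathcal L_\rho(\bm U^{k,n},\bm v^k,\bm d^k)\le \mathcal L_\rho(\bm U^{k,n-1},\bm v^k,\bm d^k)-\tfrac{\rho+2\gamma-\kappa\lambda}{2}\|\bm u_n^{k+1}-\bm u_n^k\|^2 .
\]
Telescoping over $n=1,\dots,S$ gives the first term of \eqref{eq:sufficient_decrease}. The hypothesis $\rho+2\gamma>\kappa\lambda$ is exactly what this step needs. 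Here I interpret the denoiser $\mathcal K_{\sigma_\lambda}$ as the exact proximal map of $\psi$ from \eqref{eq:denoiser}.

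\textbf{$\bm v$-step.} The map $\bm v\mapsto f(\bm v)+\frac\rho2\|\bm v-\bm u^{k+1}+\bm d^k\|^2$ is quadratic with Hessian $A^\top A+\rho I$, and $\bm v^{k+1}$ is its exact minimizer. The exact second-order expansion around the minimizer gives
\[
\mathcal L_\rho(\bm u^{k+1},\bm v^{k+1},\bm d^k)-\mathcal L_\rho(\bm u^{k+1},\bm v^k,\bm d^k)=-\tfrac12\|A(\bm v^{k+1}-\bm v^k)\|^2-\tfrac\rho2\|\bm v^{k+1}-\bm v^k\|^2 .
\]

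\textbf{Dual step.} This is the key, mildly delicate step. From \eqref{eq:aug_lag_expanded} and the update $\bm d^{k+1}-\bm d^k=\bm v^{k+1}-\bm u^{k+1}$, the dual step changes $\mathcal L_\rho$ by $\rho\|\bm d^{k+1}-\bm d^k\|^2$, which is an increase. To control it, I use the optimality condition of the $\bm v$-step:
\[
A^\top(A\bm v^{k+1}-\bm b)+\rho(\bm v^{k+1}-\bm u^{k+1}+\bm d^k)=0,
\quad\text{that is,}\quad
A^\top(A\bm v^{k+1}-\bm b)=-\rho\,\bm d^{k+1}.
\]
This holds for every $k\ge0$. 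Subtracting the identity at consecutive indices (hence the restriction $k\ge1$) gives
\[
\rho(\bm d^{k+1}-\bm d^k)=-A^\top A(\bm v^{k+1}-\bm v^k).
\]
Therefore
\[
\rho\|\bm d^{k+1}-\bm d^k\|^2=\tfrac1\rho\|A^\top A\Delta\bm v\|^2\le\tfrac{\|A\|_2^2}{\rho}\|A\Delta\bm v\|^2=\tfrac{L_f}{\rho}\|A\Delta\bm v\|^2,
\]
where $\Delta\bm v:=\bm v^{k+1}-\bm v^k$.

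Adding the three bounds yields \eqref{eq:sufficient_decrease}. The condition $\rho>2L_f$ serves only to make the last coefficient $\frac12-\frac{L_f}{\rho}$ positive. The main point to check carefully is the block-wise identification in the $\bm u$-step, including the periodic wrap-around for $n=1$ and $n=S$. The dual bound itself is routine once the $\bm v$-optimality identity is written as $A^\top(A\bm v^{k+1}-\bm b)=-\rho\bm d^{k+1}$.
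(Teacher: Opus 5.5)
Your proposal is correct and matches the paper's proof step for step. It uses the same three-way split of the change in $\mathcal L_\rho$, the telescoping strong-convexity decrease over the intermediate Gauss--Seidel iterates $\bm U^{k,n}$, and the exact quadratic expansion at the $\bm v$-minimizer. It also controls the dual increase $\rho\|\bm d^{k+1}-\bm d^k\|_2^2$ in the same way, via $\rho\bm d^{k+1}=-A^\top(A\bm v^{k+1}-\bm b)$ at consecutive iterates. One small remark: your caveat $S\ge 3$ is stronger than needed. For $S=2$ we have $\frac{\gamma}{2}\|D\bm u\|_2^2=\gamma\|\bm u_1-\bm u_2\|_2^2$, and both coupling terms in the block objective refer to the same neighbor, so the block identification still holds; only $S=1$ is degenerate.
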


\begin{proof}
We estimate the changes of the augmented Lagrangian along the three updates.

Fixing $\bm v^k$ and $\bm d^k$, we begin with the $\bm u$-update. For the $n$th slice update $(n=1, \cdots, S)$, we define the single-slice objective
\begin{align}
Q_n^k(\bm z)
:=
\lambda\psi(\bm z)
+\frac{\rho}{2}\|\bm z-(\bm v_n^k+\bm d_n^k)\|_2^2
+\frac{\gamma}{2}\|\bm z-\bm u_{n,-}^k\|_2^2
+\frac{\gamma}{2}\|\bm z-\bm u_{n,+}^k\|_2^2 ,
\label{eq:block_objective}
\end{align}
where $\bm u_{n,-}^k$ and $\bm u_{n,+}^k$ are defined in \eqref{eq:u-periodic}.
To handle the GS sweep compactly, we introduce $\bm U^{k,n}$ to denote the intermediate iterate of the stacked vector after the first $n$ slices have been updated in the $k$th ADMM iteration, that is,
\begin{equation}\label{eq:U}
\bm U^{k,n}
:=
[(\bm u_1^{k+1})^\top,\ldots,(\bm u_n^{k+1})^\top,
(\bm u_{n+1}^{k})^\top,\ldots,(\bm u_S^{k})^\top]^\top,
\end{equation}
with $\bm U^{k,0}:=\bm u^k$ and $\bm U^{k,S}:=\bm u^{k+1}$.
At the time of the $n$th slice update, $Q_n^k$ contains exactly the terms in
$\mathcal{L}_\rho(\cdot,\bm v^k,\bm d^k)$ that depend on the slice $\bm u_n$, with all other slices fixed at their most recently computed values.
The remaining terms in $\mathcal{L}_\rho$ are independent of $\bm z$ and cancel in the difference, which implies
\[
\mathcal{L}_\rho(\bm U^{k,n},\bm v^k,\bm d^k)
-
\mathcal{L}_\rho(\bm U^{k,n-1},\bm v^k,\bm d^k)
=
Q_n^k(\bm u_n^{k+1})-Q_n^k(\bm u_n^k).
\]
Under Assumption~\ref{ass:psi} and the the condition that $\rho+2\gamma>\kappa\lambda$, $Q_n^k$ is $(\rho+2\gamma-\kappa\lambda)$-strongly convex. Since the $n$th slice update minimizes $Q_n^k$, we have
\[
Q_n^k(\bm u_n^{k+1})-Q_n^k(\bm u_n^k)
\le
-\frac{\rho+2\gamma-\kappa\lambda}{2}\|\bm u_n^{k+1}-\bm u_n^k\|_2^2 .
\]
Consequently,
\[
\mathcal{L}_\rho(\bm U^{k,n},\bm v^k,\bm d^k)
-
\mathcal{L}_\rho(\bm U^{k,n-1},\bm v^k,\bm d^k)
\le
-\frac{\rho+2\gamma-\kappa\lambda}{2}\|\bm u_n^{k+1}-\bm u_n^k\|_2^2 .
\]
Summing over $n=1,\dots,S$ yields
\begin{equation}\label{eq:u_decrease}
\mathcal{L}_\rho(\bm u^{k+1},\bm v^k,\bm d^k)
-
\mathcal{L}_\rho(\bm u^k,\bm v^k,\bm d^k)
\le
-\frac{\rho+2\gamma-\kappa\lambda}{2}\|\bm u^{k+1}-\bm u^k\|_2^2 .
\end{equation}

Next, we estimate the decrease from the $\bm v$-update. Let
\[
\Delta\bm v^{k+1}:=\bm v^{k+1}-\bm v^k .
\]
For fixed $\bm u^{k+1}$ and $\bm d^k$, define the $\bm v$-dependent quadratic function
\[
q_k(\bm v)
:=
\frac12\|A\bm v-\bm b\|_2^2
+
\rho\langle \bm v-\bm u^{k+1},\bm d^k\rangle
+
\frac{\rho}{2}\|\bm v-\bm u^{k+1}\|_2^2 .
\]
By the expanded form \eqref{eq:aug_lag_expanded}, the difference,
\[
\mathcal{L}_\rho(\bm u^{k+1},\bm v^{k+1},\bm d^k)
-
\mathcal{L}_\rho(\bm u^{k+1},\bm v^k,\bm d^k),
\]
is equal to $q_k(\bm v^{k+1})-q_k(\bm v^k)$. Since $q_k$ is quadratic with Hessian $A^\top A+\rho I$, we have
\begin{align}
q_k(\bm v^{k+1})-q_k(\bm v^k)
=&\;
\left\langle \nabla q_k(\bm v^{k+1}),\Delta\bm v^{k+1}\right\rangle
-\frac12\|A\Delta\bm v^{k+1}\|_2^2
-\frac{\rho}{2}\|\Delta\bm v^{k+1}\|_2^2 .
\label{eq:v_step_expand}
\end{align}
The optimality condition of the $\bm v$-update is
\[
\nabla q_k(\bm v^{k+1})
=
A^\top(A\bm v^{k+1}-\bm b)
+
\rho(\bm v^{k+1}-\bm u^{k+1}+\bm d^k)
=
\bm 0.
\]
Therefore,
\begin{equation}\label{eq:v_decrease}
\mathcal{L}_\rho(\bm u^{k+1},\bm v^{k+1},\bm d^k)
-
\mathcal{L}_\rho(\bm u^{k+1},\bm v^k,\bm d^k)
=
-\frac12\|A\Delta\bm v^{k+1}\|_2^2
-\frac{\rho}{2}\|\Delta\bm v^{k+1}\|_2^2 .
\end{equation}

It remains to control the change caused by the dual update. From the $\bm v$-optimality condition and
\[
\bm d^{k+1}=\bm d^k+\bm v^{k+1}-\bm u^{k+1},
\]
we obtain
\begin{equation}\label{eq:dual_relation}
A^\top(A\bm v^{k+1}-\bm b)+\rho\bm d^{k+1}=\bm 0,
\end{equation}
or equivalently,
\begin{equation}
\label{eq:dual_relation2}    \bm d^{k+1}
=
-\frac1\rho A^\top(A\bm v^{k+1}-\bm b).
\end{equation}
Applying this relation at two consecutive iterations gives, for $k\ge 1$,
\begin{equation}\label{eq:difference_dual}
\bm d^{k+1}-\bm d^k
=
-\frac1\rho A^\top A(\bm v^{k+1}-\bm v^k).
\end{equation}
Using the expanded form \eqref{eq:aug_lag_expanded}, we also have
\[
\mathcal{L}_\rho(\bm u^{k+1},\bm v^{k+1},\bm d^{k+1})
-
\mathcal{L}_\rho(\bm u^{k+1},\bm v^{k+1},\bm d^k)
=
\rho\left\langle
\bm v^{k+1}-\bm u^{k+1},
\bm d^{k+1}-\bm d^k
\right\rangle .
\]
It follows from $\bm v^{k+1}-\bm u^{k+1}=\bm d^{k+1}-\bm d^k$ together with \eqref{eq:difference_dual} that 
\begin{align}
 \notag   &\mathcal{L}_\rho(\bm u^{k+1},\bm v^{k+1},\bm d^{k+1})
-
\mathcal{L}_\rho(\bm u^{k+1},\bm v^{k+1},\bm d^k)\\
=&
\rho\|\bm d^{k+1}-\bm d^k\|_2^2 =
\frac1{\rho}
\|A^\top A(\bm v^{k+1}-\bm v^k)\|_2^2
\le
\frac{L_f}{\rho}
\|A(\bm v^{k+1}-\bm v^k)\|_2^2 . \label{eq:dual_increase}
\end{align}

Combining the estimates for the $\bm u$-step in \eqref{eq:u_decrease}, the $\bm v$-step in \eqref{eq:v_decrease}, and the dual update in \eqref{eq:dual_increase} gives \eqref{eq:sufficient_decrease}.
\end{proof}

\begin{assumption}\label{ass:psi_lower}
The function $\psi$ is bounded below.
\end{assumption}

This assumption holds for commonly used nonnegative regularizers, such as Tikhonov and TV regularization.

\begin{lemma}\label{lem:lower_bounded}
Suppose Assumption~\ref{ass:psi_lower} holds and $\rho>2L_f$. Then the augmented Lagrangian sequence $\{\mathcal{L}_\rho(\bm u^k,\bm v^k,\bm d^k)\}_{k\ge 1}$ is bounded below.
\end{lemma}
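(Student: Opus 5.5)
The plan is to use the dual relation \eqref{eq:dual_relation2}, $\bm d^{k}=-\frac1\rho A^\top(A\bm v^{k}-\bm b)$, which holds for all $k\ge1$. It lets us eliminate the dual variable in the cross term of \eqref{eq:aug_lag_expanded}. Once that is done, we compare $\mathcal L_\rho$ with the objective $f+g$ evaluated at a shifted point, and then bound $f$ from below by zero and $g$ from below via Assumption~\ref{ass:psi_lower}.

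Concretely, fix $k\ge1$. Writing $\nabla f(\bm v^k)=A^\top(A\bm v^k-\bm b)$, we have $\rho\bm d^k=-\nabla f(\bm v^k)$. Substituting into \eqref{eq:aug_lag_expanded} gives
\begin{equation*}
\mathcal L_\rho(\bm u^k,\bm v^k,\bm d^k)=f(\bm v^k)+\langle\nabla f(\bm v^k),\bm u^k-\bm v^k\rangle+\frac{\rho}{2}\|\bm u^k-\bm v^k\|_2^2+g(\bm u^k).
\end{equation*}
Since $\nabla f$ is $L_f$-Lipschitz, the descent lemma yields
\begin{equation*}
f(\bm u^k)\le f(\bm v^k)+\langle\nabla f(\bm v^k),\bm u^k-\bm v^k\rangle+\frac{L_f}{2}\|\bm u^k-\bm v^k\|_2^2 .
\end{equation*}
Combining the two displays,
\begin{equation*}
\mathcal L_\rho(\bm u^k,\bm v^k,\bm d^k)\ge f(\bm u^k)+g(\bm u^k)+\frac{\rho-L_f}{2}\|\bm u^k-\bm v^k\|_2^2 .
\end{equation*}
Because $\rho>2L_f>L_f$, the last term is nonnegative.

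To finish, note that $f\ge0$ and $\frac{\gamma}{2}\|D\bm u\|_2^2\ge0$. Assumption~\ref{ass:psi_lower} gives $\psi\ge c$ for some constant $c$, so $g(\bm u^k)\ge\lambda S c$. Hence $\mathcal L_\rho(\bm u^k,\bm v^k,\bm d^k)\ge\lambda Sc$ for all $k\ge1$, which is the claimed bound. One can also drop the descent lemma and use $f(\bm v^k)\ge0$ directly; the cross term is then handled by Young's inequality together with $\|\nabla f(\bm v^k)\|^2\le 2L_f f(\bm v^k)$, and this is where $\rho>2L_f$ would be used.

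The main subtlety is the index range. The relation \eqref{eq:dual_relation2} is only available after at least one $\bm v$/$\bm d$ update, so we restrict to $k\ge1$; the initial $\bm d^0$ is arbitrary. The rest is routine. Weak convexity plays no role here; only the lower bound on $\psi$ is needed.
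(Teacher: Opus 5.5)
Your proof is correct, but it takes a different route from the paper's. The paper works with the completed-square form of $\mathcal L_\rho$. It discards $\frac{\rho}{2}\|\bm v^k-\bm u^k+\bm d^k\|_2^2\ge 0$ and uses the dual relation to write $-\frac{\rho}{2}\|\bm d^k\|_2^2=-\frac{1}{2\rho}\|\nabla f(\bm v^k)\|_2^2$. It then bounds this with the least-squares estimate $\|\nabla f(\bm v)\|_2^2\le 2L_f f(\bm v)$ and arrives at $\mathcal L_\rho(\bm u^k,\bm v^k,\bm d^k)\ge (1-L_f/\rho)f(\bm v^k)+g(\bm u^k)$. This is essentially the Young's-inequality alternative you sketch at the end.

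Your main argument instead substitutes $\rho\bm d^k=-\nabla f(\bm v^k)$ into the cross term of the expanded form and uses the descent lemma to move $f$ from $\bm v^k$ to $\bm u^k$. This gives the stronger statement $\mathcal L_\rho(\bm u^k,\bm v^k,\bm d^k)\ge F_\gamma(\bm u^k)+\frac{\rho-L_f}{2}\|\bm u^k-\bm v^k\|_2^2$. So the augmented Lagrangian dominates the original objective at the current iterate and is bounded below by $\inf F_\gamma$. That is the form you would want if you later tried to deduce boundedness of the iterates from coercivity of $F_\gamma$, as in the paper's remark on coercivity. The paper's version is marginally shorter and states its bound through $f(\bm v^k)$ rather than $f(\bm u^k)$.

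Both arguments need only $\rho\ge L_f$. Your aside that the Young route is where $\rho>2L_f$ would be used is therefore slightly off. With the natural weighting, that route gives the coefficient $1-L_f/\rho$, which is already nonnegative for $\rho\ge L_f$. The stronger hypothesis $\rho>2L_f$ is really needed only for the sufficient-decrease estimate in Theorem~\ref{thm:sufficient_decrease}.
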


\begin{proof}
Using the relation at the $k$th iteration, we obtain
\begin{align}
   \notag \mathcal{L}_\rho(\bm u^k,\bm v^k,\bm d^k)
=&
f(\bm v^k)+g(\bm u^k)
+\frac{\rho}{2}\|\bm v^k-\bm u^k+\bm d^k\|_2^2
-\frac{\rho}{2}\|\bm d^k\|_2^2 \\
\ge&
f(\bm v^k)+g(\bm u^k)-\frac{1}{2\rho}\|A^\top(A\bm v^{k}-\bm b)\|_2^2.\label{ineq:lower_bound}
\end{align}
Since $f(\bm v)=\frac12\|A\bm v-\bm b\|_2^2$, it is straightforward that
\[
\|\nabla f(\bm v)\|_2^2
=
\|A^\top(A\bm v-\bm b)\|_2^2
\le
\|A\|_2^2\|A\bm v-\bm b\|_2^2
=
2L_f f(\bm v).
\]
We further plug the expression of $g(\bm u^k)$ into \eqref{ineq:lower_bound} and obtain
\[
\mathcal{L}_\rho(\bm u^k,\bm v^k,\bm d^k)
\ge
\left(1-\frac{L_f}{\rho}\right)f(\bm v^k)+g(\bm u^k) =\left(1-\frac{L_f}{\rho}\right)f(\bm v^k)+\lambda\sum_{n=1}^{S}\psi(\bm u_n^k)
+\frac{\gamma}{2}\|D\bm u^k\|_2^2.
\]
Since $\rho>2L_f$, we have $1-L_f/\rho>0$. The boundedness of $\psi$ in Assumption~\ref{ass:psi_lower} guarantees that $\{\mathcal{L}_\rho(\bm u^k,\bm v^k,\bm d^k)\}_{k\ge1}$ is bounded below.
\end{proof}

\begin{corollary}\label{cor:asymptotic_regularity}
Suppose Assumptions~\ref{ass:psi}--\ref{ass:psi_lower} hold, $\rho>2L_f,$ and $\rho+2\gamma>\kappa\lambda$. Then $\sum_{k=1}^{\infty}\|\bm u^{k+1}-\bm u^k\|_2^2<\infty$ and $\sum_{k=1}^{\infty}\|\bm v^{k+1}-\bm v^k\|_2^2<\infty$. Consequently,
\[
\|\bm u^{k+1}-\bm u^k\|_2\to 0,\qquad
\|\bm v^{k+1}-\bm v^k\|_2\to 0,\qquad
\|\bm d^{k+1}-\bm d^k\|_2\to 0,
\]
and the primal residual satisfies $\|\bm v^{k+1}-\bm u^{k+1}\|_2\to 0$. 
\end{corollary}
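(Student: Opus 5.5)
The plan is a standard telescoping argument that combines Theorem~\ref{thm:sufficient_decrease} with Lemma~\ref{lem:lower_bounded}, followed by the dual identity \eqref{eq:difference_dual}.

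\textbf{Step 1: summability of the primal increments.} Under the hypotheses $\rho>2L_f$ and $\rho+2\gamma>\kappa\lambda$, all three coefficients on the right-hand side of \eqref{eq:sufficient_decrease} are nonnegative. In particular, $\tfrac12-\tfrac{L_f}{\rho}>0$, so the $\|A(\bm v^{k+1}-\bm v^k)\|_2^2$ term may simply be dropped. Hence, for $k\ge1$,
\[
c_u\|\bm u^{k+1}-\bm u^k\|_2^2+\frac{\rho}{2}\|\bm v^{k+1}-\bm v^k\|_2^2
\le \mathcal L_\rho(\bm u^k,\bm v^k,\bm d^k)-\mathcal L_\rho(\bm u^{k+1},\bm v^{k+1},\bm d^{k+1}),
\]
with $c_u:=(\rho+2\gamma-\kappa\lambda)/2>0$. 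It follows that the sequence $\{\mathcal L_\rho(\bm u^k,\bm v^k,\bm d^k)\}_{k\ge1}$ is nonincreasing. By Lemma~\ref{lem:lower_bounded}, which needs only Assumption~\ref{ass:psi_lower} and $\rho>2L_f$, it is bounded below by some $\underline{\mathcal L}>-\infty$. Summing the inequality from $k=1$ to $K$ telescopes the right-hand side to at most $\mathcal L_\rho(\bm u^1,\bm v^1,\bm d^1)-\underline{\mathcal L}$. This bound is independent of $K$, and it is finite because the first iterate has finite Lagrangian: $\bm u^1$ minimizes the proper block objectives, so $\psi(\bm u_n^1)<\infty$. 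Letting $K\to\infty$ gives both summability claims, and therefore $\|\bm u^{k+1}-\bm u^k\|_2\to0$ and $\|\bm v^{k+1}-\bm v^k\|_2\to0$.

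\textbf{Step 2: dual increments.} I will use \eqref{eq:difference_dual}, which holds for $k\ge1$. Since $\|A^\top A\|_2=L_f$, it gives
\[
\|\bm d^{k+1}-\bm d^k\|_2\le \frac{L_f}{\rho}\|\bm v^{k+1}-\bm v^k\|_2\to0.
\]

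\textbf{Step 3: primal residual.} The dual update gives $\bm v^{k+1}-\bm u^{k+1}=\bm d^{k+1}-\bm d^k$, so the primal residual tends to zero by Step 2.

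\textbf{Main point of care.} The only delicate point is the index bookkeeping. Both \eqref{eq:sufficient_decrease} and \eqref{eq:difference_dual} are valid only from $k\ge1$, because the relation \eqref{eq:dual_relation2} holds only after at least one $\bm v$-update and not at the arbitrary initialization. Consistently with this, the telescoping in Step 1 must start at $k=1$, and the lower bound used there is the one from Lemma~\ref{lem:lower_bounded}, which is stated for $k\ge1$. No further obstacle is expected.
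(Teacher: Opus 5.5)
Your proposal is correct and follows the paper's proof essentially step for step. You telescope the sufficient-decrease inequality from $k=1$ against the lower bound of Lemma~\ref{lem:lower_bounded}, apply \eqref{eq:difference_dual} to the dual increments, and use $\bm v^{k+1}-\bm u^{k+1}=\bm d^{k+1}-\bm d^k$ for the primal residual. Your added checks, namely finiteness of $\mathcal{L}_\rho(\bm u^1,\bm v^1,\bm d^1)$ and the explicit bound $\|\bm d^{k+1}-\bm d^k\|_2\le (L_f/\rho)\|\bm v^{k+1}-\bm v^k\|_2$, are sound details that the paper leaves implicit.
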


\begin{proof}
Since $\rho>2L_f$ and $\rho+2\gamma>\kappa\lambda$, the coefficients $\frac12-\frac{L_f}{\rho}$ and $\frac{\rho+2\gamma-\kappa\lambda}{2}$ in \eqref{eq:sufficient_decrease} are positive. Summing \eqref{eq:sufficient_decrease} over $k$ and using Lemma~\ref{lem:lower_bounded} gives $\sum_{k=1}^{\infty}\|\bm u^{k+1}-\bm u^k\|_2^2<\infty$ and $\sum_{k=1}^{\infty}\|\bm v^{k+1}-\bm v^k\|_2^2<\infty$. Therefore, $\|\bm u^{k+1}-\bm u^k\|_2\to 0$ and $\|\bm v^{k+1}-\bm v^k\|_2\to 0$. The relation \eqref{eq:difference_dual} implies $\|\bm d^{k+1}-\bm d^k\|_2\to 0$. Finally, by the dual update, $\bm v^{k+1}-\bm u^{k+1}=\bm d^{k+1}-\bm d^k$, and hence the primal residual also converges to zero.
\end{proof}

\begin{lemma}\label{lem:relative_error}
Suppose Assumption~\ref{ass:psi} holds. For every $k\ge 1$, there exist vectors
$\bm e_u^{k+1}$, $\bm e_v^{k+1}$, and $\bm e_d^{k+1}$ such that
\begin{align}
\bm e_u^{k+1}
&\in
\partial g(\bm u^{k+1})
-\rho(\bm v^{k+1}-\bm u^{k+1}+\bm d^{k+1}),
\label{eq:relative_error_u}\\
\bm e_v^{k+1}
&=
\nabla f(\bm v^{k+1})
+\rho(\bm v^{k+1}-\bm u^{k+1}+\bm d^{k+1}),
\label{eq:relative_error_v}\\
\bm e_d^{k+1}
&=
\rho(\bm v^{k+1}-\bm u^{k+1}),
\label{eq:relative_error_d}
\end{align}
and
\begin{equation}\label{eq:relative_error_bound}
\|\bm e_u^{k+1}\|_2+\|\bm e_v^{k+1}\|_2+\|\bm e_d^{k+1}\|_2
\le
C_{\rm rel}
\left(
\|\bm u^{k+1}-\bm u^k\|_2
+
\|\bm v^{k+1}-\bm v^k\|_2
\right),
\end{equation}
where $C_{\rm rel}>0$ is independent of $k$. Consequently, under the assumptions of Corollary~\ref{cor:asymptotic_regularity},
\[
\|\bm e_u^{k+1}\|_2+\|\bm e_v^{k+1}\|_2+\|\bm e_d^{k+1}\|_2\to 0 .
\]
\end{lemma}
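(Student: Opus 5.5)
The plan is to define the three error vectors directly from the optimality conditions of the three updates in Algorithm~\ref{alg:cs-lsm-pnp}. Each of them will then be a linear combination of successive differences of the iterates.

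\textbf{The $\bm v$- and $\bm d$-errors.} By the $\bm v$-optimality condition in the form \eqref{eq:dual_relation}, we have $\nabla f(\bm v^{k+1})+\rho\bm d^{k+1}=\bm 0$. Hence \eqref{eq:relative_error_v} reduces to
\[
\bm e_v^{k+1}=\rho(\bm v^{k+1}-\bm u^{k+1}).
\]
Similarly, the dual update gives
\[
\bm e_d^{k+1}=\rho(\bm v^{k+1}-\bm u^{k+1})=\rho(\bm d^{k+1}-\bm d^k).
\]
By \eqref{eq:difference_dual}, which is valid for $k\ge1$, both are bounded by $\|A^\top A\|_2\|\bm v^{k+1}-\bm v^k\|_2=L_f\|\bm v^{k+1}-\bm v^k\|_2$.

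\textbf{The $\bm u$-error: slicewise condition.} The GS sweep minimizes $Q_n^k$ in \eqref{eq:block_objective}, which is strongly convex under the weak convexity of $\psi$ (we assume $\rho+2\gamma>\kappa\lambda$ as in Corollary~\ref{cor:asymptotic_regularity}, so the minimizer exists and the subdifferential sum rule applies). The first-order condition gives, for each $n$, some $\bm s_n\in\partial\psi(\bm u_n^{k+1})$ with
\[
\bm 0=\lambda\bm s_n+\rho(\bm u_n^{k+1}-\bm v_n^k-\bm d_n^k)+\gamma(2\bm u_n^{k+1}-\bm u_{n,-}^k-\bm u_{n,+}^k).
\]
The exact gradient of the coupling term at $\bm u^{k+1}$ is
\[
\gamma(D^\top D\bm u^{k+1})_n=\gamma(2\bm u_n^{k+1}-\bm u_{n-1}^{k+1}-\bm u_{n+1}^{k+1}).
\]
Compared with the slicewise condition, the only discrepancy is that $\bm u_{n,+}^k=\bm u_{n+1}^k$ for $n<S$, and $\bm u_{1,-}^k=\bm u_S^k$ for $n=1$. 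Using these relations, I set
\[
\bm w:=(\lambda\bm s_n)_n+\gamma D^\top D\bm u^{k+1}\in\partial g(\bm u^{k+1}),
\]
which is a valid element because $g$ is separable plus a smooth quadratic and the weakly convex subdifferential is additive. Then
\[
\bm w_n=\rho(\bm v_n^k+\bm d_n^k-\bm u_n^{k+1})+\gamma\big[(\bm u_{n,-}^k-\bm u_{n-1}^{k+1})+(\bm u_{n,+}^k-\bm u_{n+1}^{k+1})\big].
\]
The bracket is nonzero only through entries of the form $\bm u_m^k-\bm u_m^{k+1}$.

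\textbf{The $\bm u$-error: assembling the bound.} Define
\[
\bm e_u^{k+1}:=\bm w-\rho(\bm v^{k+1}-\bm u^{k+1}+\bm d^{k+1}).
\]
Its $n$th block equals
\[
\rho(\bm v_n^k-\bm v_n^{k+1})+\rho(\bm d_n^k-\bm d_n^{k+1})+\gamma(\text{GS lag terms}).
\]
The GS lag terms contribute at most $2\gamma\|\bm u^{k+1}-\bm u^k\|_2$ in total: each index $m$ appears in at most two blocks (once as a "$+$" neighbor for $n=m-1$, and once as a "$-$" neighbor for the wrap-around at $n=1$), so a triangle inequality suffices. This gives
\[
\|\bm e_u^{k+1}\|_2\le(\rho+L_f)\|\bm v^{k+1}-\bm v^k\|_2+2\gamma\|\bm u^{k+1}-\bm u^k\|_2.
\]
Summing the three bounds yields \eqref{eq:relative_error_bound} with
\[
C_{\rm rel}=\max\{\rho+3L_f,\;2\gamma\}.
\]
The limit statement then follows immediately from Corollary~\ref{cor:asymptotic_regularity}.

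\textbf{Main obstacle.} The main obstacle is the $\bm u$-step. One must carefully track which neighbors are stale in the cyclic sweep with periodic wrap-around, and justify the subdifferential calculus for the weakly convex $g$, namely that $\partial(\lambda\sum\psi+\text{smooth})$ decomposes blockwise plus the gradient. The latter I would handle by applying the convex sum rule to $g+\frac{\kappa\lambda}{2}\|\cdot\|^2$ and subtracting $\kappa\lambda\bm u$.
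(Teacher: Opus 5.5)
Your proposal is correct and follows essentially the same argument as the paper: the same subgradient $\bm w=(\lambda\bm s_n)_n+\gamma D^\top D\bm u^{k+1}$ (the paper's $\bm q^{k+1}$), the same slicewise identity for $\bm e_u^{k+1}$ with the Gauss--Seidel lag terms bounded by $2\gamma\|\bm u^{k+1}-\bm u^k\|_2$, and the same use of \eqref{eq:difference_dual} to bound $\rho\|\bm d^{k+1}-\bm d^k\|_2$ by $L_f\|\bm v^{k+1}-\bm v^k\|_2$. Your constant $\max\{\rho+3L_f,2\gamma\}$ is a valid sharpening of the paper's $2\gamma+\rho+3L_f$, and your explicit sum-rule argument fills in the step the paper only asserts; the extra hypothesis $\rho+2\gamma>\kappa\lambda$ you invoke is needed only for the update to be well defined, since the first-order condition at a minimizer of $Q_n^k$ follows from its convex-plus-smooth structure.
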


\begin{proof}
Let $\Delta\bm u^{k+1}:=\bm u^{k+1}-\bm u^k$, $\Delta\bm v^{k+1}:=\bm v^{k+1}-\bm v^k$, $\Delta\bm d^{k+1}:=\bm d^{k+1}-\bm d^k$.
The optimality condition of the $\bm v$-update gives
$
\nabla f(\bm v^{k+1})
+
\rho(\bm v^{k+1}-\bm u^{k+1}+\bm d^k)
=
\bm 0 .
$
Thus, with $\bm e_v^{k+1}$ defined by \eqref{eq:relative_error_v},
\begin{equation}\label{eq:ev_bound}
\|\bm e_v^{k+1}\|_2
=
\rho\|\Delta\bm d^{k+1}\|_2 .
\end{equation}
Moreover, the dual update gives
$
\bm v^{k+1}-\bm u^{k+1}=\Delta\bm d^{k+1}.
$
Thus, with $\bm e_d^{k+1}$ defined by \eqref{eq:relative_error_d},
\begin{equation}\label{eq:ed_bound}
\|\bm e_d^{k+1}\|_2
=
\rho\|\Delta\bm d^{k+1}\|_2 .
\end{equation}

It remains to estimate the $\bm u$-component. The optimality condition of \eqref{eq:axial_u_sub} implies that for every slice $n$, there exists
$\bm s_n^{k+1}\in\partial\psi(\bm u_n^{k+1})$ such that
\begin{equation}\label{eq:u_block_opt_relative}
\lambda\bm s_n^{k+1}
+
\rho(\bm u_n^{k+1}-\bm v_n^k-\bm d_n^k)
+
\gamma(\bm u_n^{k+1}-\bm u_{n,-}^{k})
+
\gamma(\bm u_n^{k+1}-\bm u_{n,+}^{k})
=
\bm 0 .
\end{equation}
Define
\[
\bm q_n^{k+1}
:=
\lambda\bm s_n^{k+1}
+
\gamma(2\bm u_n^{k+1}-\bm u_{n-1}^{k+1}-\bm u_{n+1}^{k+1}),
\qquad n=1,\dots,S,
\]
where $\bm u_0^{k+1}=\bm u_S^{k+1}$ and $\bm u_{S+1}^{k+1}=\bm u_1^{k+1}$, 
and let
$
\bm q^{k+1}:=
[(\bm q_1^{k+1})^\top,\ldots,(\bm q_S^{k+1})^\top]^\top .
$
By the definition of $g$, we have
$
\bm q^{k+1}\in\partial g(\bm u^{k+1}).
$
Now define
\begin{equation}\label{eq:eu_def_relative}
\bm e_u^{k+1}
:=
\bm q^{k+1}
-
\rho(\bm v^{k+1}-\bm u^{k+1}+\bm d^{k+1}).
\end{equation}
Then \eqref{eq:relative_error_u} holds. Moreover, by \eqref{eq:u_block_opt_relative}, the $n$th slice of $\bm e_u^{k+1}$ satisfies
\begin{equation}\label{eq:eu_identity_relative}
\bm e_{u,n}^{k+1}
=
-\rho(\bm v_n^{k+1}-\bm v_n^k)
-\rho(\bm d_n^{k+1}-\bm d_n^k)
+\gamma(\bm u_{n,-}^{k}-\bm u_{n-1}^{k+1})
+\gamma(\bm u_{n,+}^{k}-\bm u_{n+1}^{k+1}).
\end{equation}
The last two terms come from the mixed old and new neighboring slices in one cyclic Gauss--Seidel sweep. By \eqref{eq:u-periodic},
\[
\left(
\sum_{n=1}^{S}
\left\|
(\bm u_{n,-}^{k}-\bm u_{n-1}^{k+1})
+
(\bm u_{n,+}^{k}-\bm u_{n+1}^{k+1})
\right\|_2^2
\right)^{1/2}
\le
2\|\Delta\bm u^{k+1}\|_2 .
\]
Therefore,
\begin{equation}\label{eq:eu_bound}
\|\bm e_u^{k+1}\|_2
\le
2\gamma\|\Delta\bm u^{k+1}\|_2
+
\rho\|\Delta\bm v^{k+1}\|_2
+
\rho\|\Delta\bm d^{k+1}\|_2 .
\end{equation}

Finally, by \eqref{eq:difference_dual},
$
\Delta\bm d^{k+1}
=
-\frac1\rho A^\top A\Delta\bm v^{k+1},
$
and hence
$
\rho\|\Delta\bm d^{k+1}\|_2
\le
L_f\|\Delta\bm v^{k+1}\|_2 .
$
Combining this estimate with \eqref{eq:ev_bound}, \eqref{eq:ed_bound}, and \eqref{eq:eu_bound} gives \eqref{eq:relative_error_bound} with any
$
C_{\rm rel}\ge 2\gamma+\rho+3L_f .
$
The final claim follows from Corollary~\ref{cor:asymptotic_regularity}.
\end{proof}

\begin{theorem}\label{thm:cluster_optimality}
Suppose Assumptions~\ref{ass:psi}--\ref{ass:psi_lower} hold, $\rho>2L_f$, and $\rho+2\gamma>\kappa\lambda$. If the sequence $\{(\bm u^k,\bm v^k,\bm d^k)\}$ generated by Algorithm \ref{alg:cs-lsm-pnp} is bounded, then every accumulation point $(\bm u^\star,\bm v^\star,\bm d^\star)$ satisfies $\bm v^\star=\bm u^\star$ and
\begin{equation}\label{eq:kkt_limit}
\bm 0\in \nabla f(\bm v^\star)+\partial g(\bm v^\star).
\end{equation}
Consequently, $\bm v^\star$ is a stationary point of \eqref{eq:convex_unified_stacked}.
\end{theorem}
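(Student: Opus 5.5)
We take a convergent subsequence $(\bm u^{k_i},\bm v^{k_i},\bm d^{k_i})\to(\bm u^\star,\bm v^\star,\bm d^\star)$ and pass to the limit in the relative-error inclusions of Lemma~\ref{lem:relative_error}, using the closedness of the limiting subdifferential of the weakly convex $\psi$. The argument runs as follows.

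First, boundedness gives the subsequence. By Corollary~\ref{cor:asymptotic_regularity}, $\|\bm u^{k+1}-\bm u^k\|\to0$, $\|\bm v^{k+1}-\bm v^k\|\to0$ and $\|\bm d^{k+1}-\bm d^k\|\to0$. Hence the shifted subsequence $(\bm u^{k_i+1},\bm v^{k_i+1},\bm d^{k_i+1})$ converges to the same limit. Since $\bm v^{k+1}-\bm u^{k+1}\to 0$, we get $\bm v^\star=\bm u^\star$.

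Second, we pass to the limit in the smooth part. From \eqref{eq:relative_error_v} along $k_i+1$, the residual $\bm e_v\to0$ and continuity of $\nabla f$ give
\[
\nabla f(\bm v^\star)+\rho(\bm v^\star-\bm u^\star+\bm d^\star)=\bm 0,
\]
that is, $\nabla f(\bm v^\star)=-\rho\bm d^\star$. This is the limit of \eqref{eq:dual_relation2}.

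Third, we pass to the limit in the nonsmooth part. Write $\bm q^{k+1}=\bm e_u^{k+1}+\rho(\bm v^{k+1}-\bm u^{k+1}+\bm d^{k+1})\in\partial g(\bm u^{k+1})$. Along the subsequence, $\bm q^{k_i+1}\to\rho\bm d^\star$. To conclude $\rho\bm d^\star\in\partial g(\bm u^\star)$, we use the convex function $h:=g+\frac{\lambda\kappa}{2}\|\cdot\|^2$; it is convex because $\psi+\frac{\kappa}{2}\|\cdot\|^2$ is convex and the quadratic $D$-term is convex. By the definition of $\partial g$, we have $\bm q^{k+1}+\lambda\kappa\bm u^{k+1}\in\partial h(\bm u^{k+1})$, and it suffices to show that the graph of $\partial h$ is closed. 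For proper closed convex $h$ this holds provided $h(\bm u^{k_i+1})\to h(\bm u^\star)$, or more simply by taking limits in the subgradient inequality
\[
h(\bm y)\ge h(\bm u^{k_i+1})+\langle \bm w^{k_i+1},\bm y-\bm u^{k_i+1}\rangle,
\]
combined with $\liminf h(\bm u^{k_i+1})\ge h(\bm u^\star)$, which follows from lower semicontinuity. Note that lower semicontinuity is exactly the direction needed: the inequality passes to the limit using only $h(\bm u^\star)\le\liminf h(\bm u^{k_i+1})$. So no continuity of function values is required, and this step goes through cleanly. We also need $\bm u^\star\in\operatorname{dom}h$; this follows because the Lagrangian values are bounded above along the iterates (by Theorem~\ref{thm:sufficient_decrease} they are nonincreasing), so $\psi(\bm u^{k}_n)$ stays bounded above, and lower semicontinuity gives finiteness at $\bm u^\star$. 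We then obtain $\rho\bm d^\star+\lambda\kappa\bm u^\star\in\partial h(\bm u^\star)$, i.e.\ $\rho\bm d^\star\in\partial g(\bm u^\star)$.

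Finally, adding the two limits gives $\bm 0=\nabla f(\bm v^\star)+\rho\bm d^\star\in\nabla f(\bm v^\star)+\partial g(\bm v^\star)$, using $\bm u^\star=\bm v^\star$. This is \eqref{eq:kkt_limit}, i.e.\ first-order stationarity for $F_\gamma$, since $F_\gamma=f+g$ with $f$ smooth. The main obstacle is the third step's closedness argument, in particular making sure the subdifferential notion for weakly convex $g$ (defined via the convexified function) is handled consistently with the slice-wise sum and the $D$-term. The subgradient-inequality limit above is what I would rely on.
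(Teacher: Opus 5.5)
Your proposal is correct and follows the paper's proof essentially step for step: you use the shifted subsequence from Corollary~\ref{cor:asymptotic_regularity} to get $\bm v^\star=\bm u^\star$, pass to the limit in the $\bm v$-optimality relation to obtain $\nabla f(\bm v^\star)=-\rho\bm d^\star$, and apply closedness of $\partial g$ to $\bm q^{k+1}\to\rho\bm d^\star$. The only difference is that you justify the graph-closedness, which the paper simply asserts, through the convex function $h=g+\frac{\lambda\kappa}{2}\|\cdot\|_2^2$ and lower semicontinuity; this is valid, and your separate check that $\bm u^\star\in\operatorname{dom}h$ is harmless but unnecessary, since properness of $h$ already rules out $h(\bm u^\star)=+\infty$ in the limiting subgradient inequality.
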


\begin{proof}
Let $(\bm u^\star,\bm v^\star,\bm d^\star)$ be an accumulation point of $\{(\bm u^k,\bm v^k,\bm d^k)\}$, which exists by boundedness and the Bolzano-Weierstrass theorem. Take a subsequence converging to it; without loss of generality, we index this subsequence by $k$ for simplicity. By Corollary~\ref{cor:asymptotic_regularity}, the shifted subsequence $(\bm u^{k+1},\bm v^{k+1},\bm d^{k+1})$ converges to the same limit. Since $\bm v^{k+1}-\bm u^{k+1}\to \bm 0$, we have $\bm v^\star=\bm u^\star$.

The optimality condition of the $\bm v$-update gives $\nabla f(\bm v^{k+1})+\rho\bm d^{k+1}=\bm 0.$
Passing to the limit yields
\begin{equation}\label{eq:v_limit_opt}
\nabla f(\bm v^\star)+\rho\bm d^\star=\bm 0.
\end{equation}
By Lemma~\ref{lem:relative_error}, there exist vectors
$\bm e_u^{k+1}$, $\bm e_v^{k+1}$, and $\bm e_d^{k+1}$ satisfying
\eqref{eq:relative_error_u}--\eqref{eq:relative_error_d}, such that
$
\|\bm e_u^{k+1}\|_2+\|\bm e_v^{k+1}\|_2+\|\bm e_d^{k+1}\|_2\to 0.
$
From \eqref{eq:relative_error_u}, there exists
$\bm q^{k+1}\in\partial g(\bm u^{k+1})$ such that
$
\bm e_u^{k+1}
=
\bm q^{k+1}
-
\rho(\bm v^{k+1}-\bm u^{k+1}+\bm d^{k+1}).
$
Since $\bm e_u^{k+1}\to\bm 0$, $\bm v^{k+1}-\bm u^{k+1}\to\bm 0$, and
$\bm d^{k+1}\to\bm d^\star$, we have
$
\bm q^{k+1}\to \rho\bm d^\star .
$
Under Assumption~\ref{ass:psi}, the function $g$ is proper, closed, and weakly convex, so its subdifferential is closed under limits. Since
$\bm q^{k+1}\in\partial g(\bm u^{k+1})$,
$\bm u^{k+1}\to\bm u^\star$, and
$\bm q^{k+1}\to\rho\bm d^\star$, we obtain
\begin{equation}\label{eq:u_limit_opt}
\rho\bm d^\star\in\partial g(\bm u^\star).
\end{equation}

Since $\bm u^\star=\bm v^\star$, combining \eqref{eq:v_limit_opt} and \eqref{eq:u_limit_opt} gives
\[
\bm 0\in \nabla f(\bm v^\star)+\partial g(\bm v^\star).
\]
Therefore, $\bm v^\star$ is a stationary point of \eqref{eq:convex_unified_stacked}.
\end{proof}

\begin{remark}
    The boundedness of the generated sequence $\{\bm u^k,\bm v^k,\bm d^k\}$ is assumed rather than proved. A sufficient condition that guarantees boundedness is coercivity \cite{wang2019global} of $\psi,$ which ensures that the augmented Lagrangian sublevel sets are compact. Both Tikhonov and TV regularizations satisfy this condition. Since coercivity may be difficult to verify for a general function $\psi,$ we state  boundedness directly as a hypothesis, which can be monitored empirically during the iteration.
\end{remark}

\begin{remark}
The weakly convex assumption on $\psi$ provides a natural middle ground  between convex and fully nonconvex regularization. Classical convergence guarantees \cite{boyd2011distributed,eckstein2015understanding} require convexity, which can be restrictive in practice, as some useful regularization models in image reconstruction are nonconvex. Weak convexity retains enough structure to support the convergence analysis above while accommodating a broader class of regularizers.  This connects to recent efforts to extend PnP convergence theory to nonconvex settings: for example, Hurault et al.~\cite{hurault2022proximal} propose a proximal denoising step that, in certain cases, coincides with the exact proximal operator of a possibly nonconvex regularizer, and Shoushtari et al.~\cite{pmlr-v235-shoushtari24a}  establish nonconvex convergence guarantees for PnP-ADMM under distribution shift, focusing on the case where a learned denoiser is applied outside its training distribution.
\end{remark}

\begin{remark}
The convergence analysis in this section is variational in nature: it assumes that the denoising step is induced by a regularization functional $\psi$. Another line of PnP research establishes convergence by imposing structural assumptions directly on the denoiser. For example, convergence results for PnP methods with black-box denoisers may require denoiser-based assumptions, such as nonexpansiveness, contractiveness, or averagedness of the denoiser itself or of a related residual operator \cite{chan2017plug,ryu2019plug}. Such results often lead to fixed-point convergence guarantees rather than convergence for a variational model. In contrast, our analysis assumes that $\psi$ is weakly convex and proves subsequential convergence for the corresponding variational formulation, and the result applies directly to Tikhonov and TV. For off-the-shelf denoisers such as BM3D, DnCNN, FFDNet, or DRUNet, the iteration is generally no longer equivalent to minimizing a fixed objective. Establishing or verifying such denoiser-based assumptions within the present framework is beyond the scope of this work; for these denoisers, we report empirical behavior only. Developing a rigorous convergence theory for black-box denoisers within the proposed axial-coupled PnP scheme remains an open problem for future work.
\end{remark}

\section{Experiments}\label{sec:experiments}

In this section, we evaluate the proposed reconstruction framework on synthetic and real zebrafish-heart CS-LSM data. 
The synthetic experiment provides a controlled setting with a known reference volume, whereas the real-data experiment evaluates the method under realistic acquisition conditions using a physical imaging system. The experiments evaluate the effects of denoiser choice, axial coupling, and the compression ratio. All experiments were implemented in Python and run on an Azure Machine Learning Standard\_E4ds\_v4 CPU instance with 4 cores and 32 GB RAM.

\subsection{Experimental setup}

\paragraph{Datasets}

For the synthetic experiment, we use a zebrafish-heart reference volume derived from LSM data acquired by our in-house LSM system~\cite{zhang2024_4d}. The underlying dataset was obtained using retrospective synchronization, which aligns image sequences from different axial slices and cardiac cycles. 
The selected reference volume ($200\times200\times120$-voxel) corresponds to a representative time point, containing both the atrium and ventricle with approximately $300$ nuclei near the chamber surfaces. Compressed measurements are generated from this reference using the forward model in \eqref{eq:forward_model} with known random binary masks, consistent with the DMD coding strategy in CS-LSM. The compression ratio is set to $R=5$ unless otherwise stated. Results for varying compression ratios are reported in Section~\ref{sect:exp4CR}.
No additional measurement noise is added in the synthetic experiments, so that the comparison isolates the effects of the inverse solver, the denoising prior, and the axial coupling term. 

For the real-data experiment, we use CS-LSM measurements and the corresponding sensing masks acquired by the DMD-based light-sheet platform described in \cite{zhang2026compressive}. The real dataset consists of $N=10$ compressed camera shots, each of size $404\times404$ pixels, acquired with compression ratio $R=15$, thus the reconstructed volume contains $S=150$ axial slices. These measurements are collected under imaging conditions comparable to those of the synthetic data in zebrafish-heart, but a corresponding volumetric ground truth is not available. To focus on reconstruction performance, we compare different models and denoisers directly on the acquired measurements without additional pre- or post-processing. While addressing effects such as rolling-shutter distortion may further improve image quality, we refer readers to our related work~\cite{zhang2026compressive} for a detailed discussion of these issues.

\paragraph{Evaluation metrics}

To assess reconstruction performance on synthetic data, we adopt two standard image-quality metrics: Peak Signal-to-Noise Ratio (PSNR) \cite{hore2010psnr} and Structural Similarity Index Measure (SSIM) \cite{wang2004ssim}. PSNR measures pixel-wise reconstruction fidelity, with larger values indicating smaller error. SSIM measures structural similarity between each reconstructed slice and the corresponding reference, following \cite{wang2004ssim}, with values closer to 1 indicating greater similarity. Both metrics are computed per slice and averaged over all valid slices. For real data, we report visual comparisons only.

\paragraph{Parameter tuning}

The PnP-ADMM framework has three hyperparameters: the ADMM penalty parameter $\rho$, the parameter $\lambda$ controlling the strength of the regularization or denoising step, and the axial coupling weight $\gamma$, with $\gamma=0$ for the slice-based model. For each denoiser and reconstruction model, we tune these parameters on the synthetic dataset using Bayesian optimization \cite{snoek2012practical,jones1998efficient}, with negative slice-averaged PSNR as the objective. Minimizing this objective is equivalent to maximizing the average reconstruction quality over the axial stack. Each Bayesian-optimization run is limited to $50$ objective evaluations over predefined search ranges, and the optimized parameters are then used to rerun the corresponding reconstruction. The optimized hyperparameters for all denoiser and model combinations are reported in Table \ref{tab:best_params_main}. For the real data, no separate parameter search is performed, as an exactly matched volumetric ground truth is unavailable; instead, parameters selected on the synthetic dataset are reused, given similar imaging conditions.

\begin{remark}
    The optimal parameter $\rho$ obtained via Bayesian optimization is notably smaller than the value required by Theorem~\ref{thm:cluster_optimality}. This is not a contradiction, as Theorem~\ref{thm:cluster_optimality} provides a sufficient (not necessary) condition for convergence of the axial-coupled PnP-ADMM scheme, and the true convergence region may be considerably larger. As shown in Table~\ref{tab:best_params_main}, smaller values of $\rho$ work well empirically and tend to yield better reconstruction quality, suggesting that the theoretical bound is conservative. Such a gap between sufficient conditions and practical performance is common in convergence analysis.
\end{remark}

\paragraph{Stopping criteria}

For synthetic data, the PnP-ADMM iterations are terminated when the relative change in iterates, i.e., $\|\bm v^{k}-\bm v^{k-1}\|_2/\|\bm v^{k-1}\|_2$,
falls below $10^{-3}$, or when the number of iterations reaches $100$. For real data, a looser tolerance of $10^{-2}$ is used to account for measurement noise. 

\begin{table}[t]
\centering
\caption{Hyperparameters selected by Bayesian optimization on the synthetic dataset.}
\vspace{0.3em}
\setlength{\tabcolsep}{5pt}
\begin{tabular}{lccccc}
\hline
\multirow{2}{*}{Denoiser}
& \multicolumn{2}{c}{Slice-based}
& \multicolumn{3}{c}{Axial-coupled} \\
\cline{2-3}\cline{4-6}
& $\rho$ & $\lambda$
& $\rho$ & $\lambda$ & $\gamma$ \\
\hline
Tikhonov & 0.0501 & 0.0198 & 0.0412 & 0.0248 & 0.0097 \\
TV       & 0.4999 & 0.0673 & 0.0068 & 0.0018 & 0.0010 \\
BM3D     & 0.7828 & 0.0075 & 0.0083 & 0.0001 & 0.0011 \\
DnCNN    & 0.0812 & 0.0008 & 0.0156 & 0.0002 & 0.0031 \\
FFDNet   & 0.1439 & 0.0014 & 0.0403 & 0.0004 & 0.0010 \\
DRUNet   & 0.1921 & 0.0018 & 0.0236 & 0.0002 & 0.0010 \\
\hline
\end{tabular}
\label{tab:best_params_main}
\end{table}

\subsection{Synthetic data}

\paragraph{Quantitative results}

\begin{table}[t]
\centering
\caption{Quantitative comparison on the synthetic dataset.}
\vspace{0.3em}
\setlength{\tabcolsep}{2.5pt}
\begin{tabular}{lcccccc}
\hline
\multirow{2}{*}{Denoiser}
& \multicolumn{2}{c}{PSNR (dB) $\uparrow$}
& \multicolumn{2}{c}{SSIM $\uparrow$}
& \multicolumn{2}{c}{Runtime (s) $\downarrow$} \\
\cline{2-3}\cline{4-5}\cline{6-7}
& Slice & Axial
& Slice & Axial
& Slice & Axial \\
\hline
Tikhonov & 29.7675 & 33.9182 & 0.7230 & 0.9283 & 0.579   & 1.267 \\
TV       & 36.4616 & 40.1035 & 0.9551 & 0.9770 & 1.861   & 3.674 \\
BM3D     & 37.8380 & 40.2827 & 0.9617 & 0.9718 & 392.380 & 501.994 \\
DnCNN    & 42.0069 & 42.8288 & 0.9751 & 0.9735 & 121.921 & 142.108 \\
FFDNet   & 43.1213 & 43.3507 & 0.9859 & 0.9866 & 61.005  & 68.156 \\
DRUNet   & 43.2100 & 43.7359 & 0.9866 & 0.9877 & 406.829 & 504.127 \\
\hline
\end{tabular}
\label{tab:synthetic_quantitative_main}
\end{table}

Table~\ref{tab:synthetic_quantitative_main} summarizes the results on synthetic data for the slice-based and axial-coupled models. Compared with slice-based reconstructions, axial coupling improves PSNR across all considered denoisers. SSIM likewise improves in most cases, with the only exception being DnCNN, where the difference is negligible.  Notably, the improvement is more pronounced for classical priors such as Tikhonov and TV, where slice-based reconstructions are less accurate, but smaller for deep-learning-based denoisers such as FFDNet and DRUNet, whose slice-based reconstructions are already close to the reference.

The gains from axial coupling reflect the multiplexed structure of the CS-LSM inverse problem. Under compressed acquisition, each camera measurement is formed by the masked superposition of $R$ axial slices. Although the data-fidelity term constrains these slices jointly, the slice-based model does not use neighboring slices as an additional prior. As a result, weak cellular or nuclear signals can be less stably localized along the axial direction and may appear fragmented or inconsistent across neighboring slices. The axial term therefore acts as a weak consistency constraint along $z$, encouraging neighboring slices to share structural information while avoiding strong axial smoothing. Except for Tikhonov, the selected $\gamma$ values in Table~\ref{tab:best_params_main} are on the order of $10^{-3}$, suggesting that weak axial coupling is effective for most denoisers.

The results reveal a clear dependence on the choice of denoiser. Tikhonov is the fastest method but yields the lowest PSNR and SSIM, while TV provides a stronger classical baseline with only a modest increase in runtime. BM3D improves upon TV in the slice-based setting, although its runtime is substantially higher due to the nonlocal patch search.  Among the DL-based denoisers, FFDNet and DRUNet achieve the best reconstruction quality. DRUNet attains the highest PSNR/SSIM in both the slice-based and axial-coupled models, whereas FFDNet achieves comparable accuracy with a much lower computational cost,  offering a better overall balance between reconstruction quality and efficiency.

\begin{figure*}
\centering
\setlength{\tabcolsep}{1pt}
\renewcommand{\arraystretch}{1.02}

\resizebox{0.8\textwidth}{!}{%
\begin{tabular}{@{}c c c c@{}}
\textbf{\footnotesize Slice-based slice 15} &
\textbf{\footnotesize Axial-coupled slice 15} &
\textbf{\footnotesize Slice-based MIP} &
\textbf{\footnotesize Axial-coupled MIP} \\[4pt]

\makebox[0pt][r]{\scriptsize\textbf{Ground truth}\hspace{0.6em}}%
\includegraphics[width=0.22\textwidth]{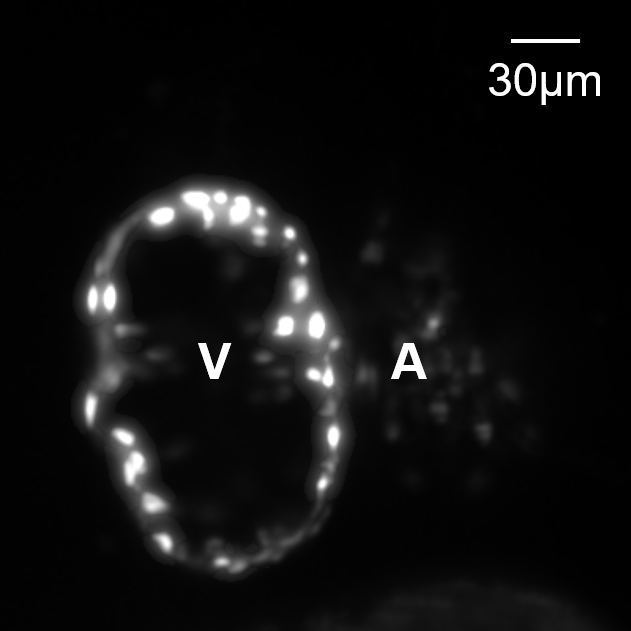} &
\includegraphics[width=0.22\textwidth]{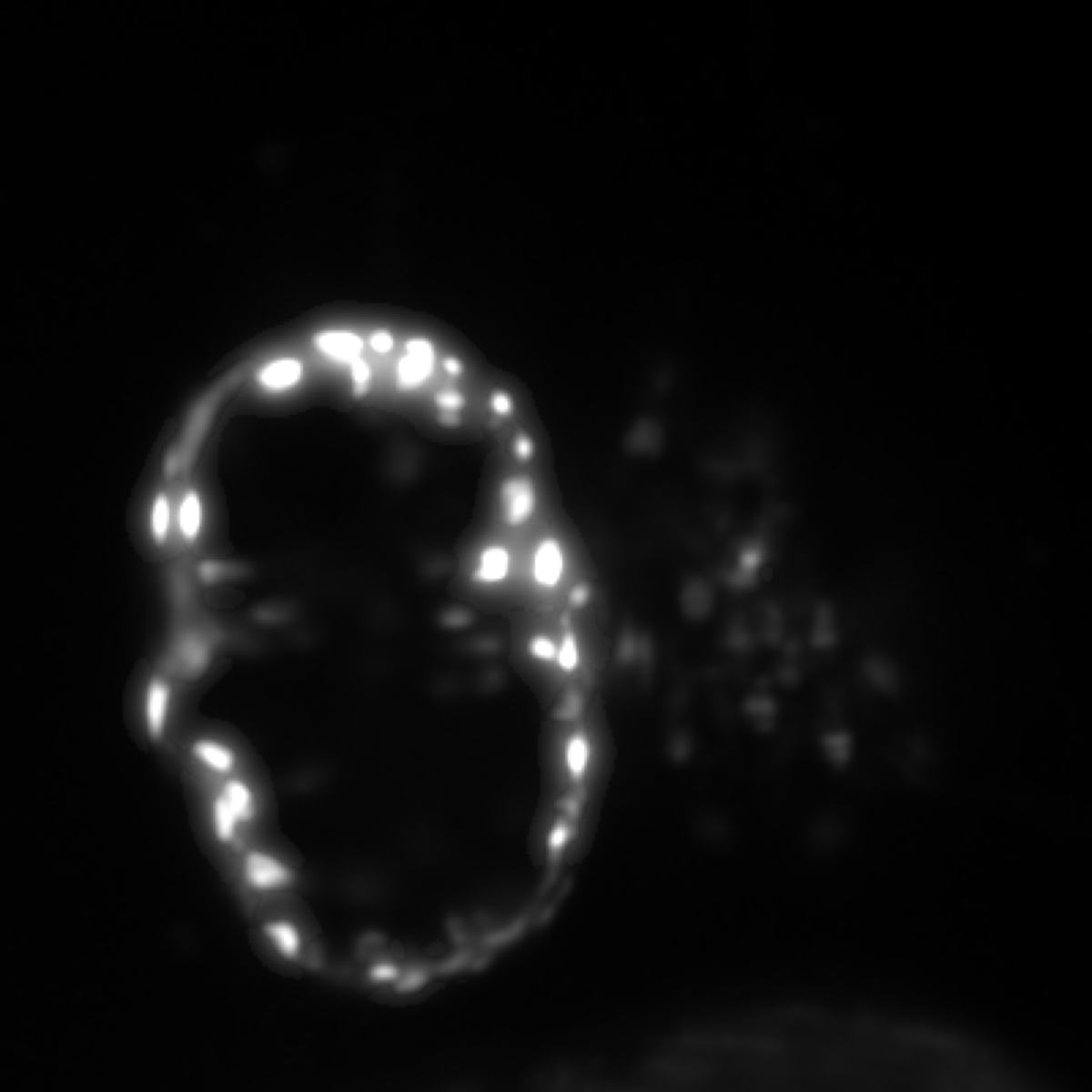} &
\includegraphics[width=0.22\textwidth]{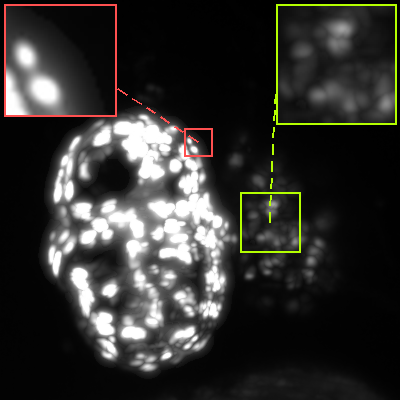} &
\includegraphics[width=0.22\textwidth]{Figures/syn_gt_mip-zoomin.png} \\[3pt]

\makebox[0pt][r]{\scriptsize\textbf{Tikhonov}\hspace{0.6em}}%
\includegraphics[width=0.22\textwidth]{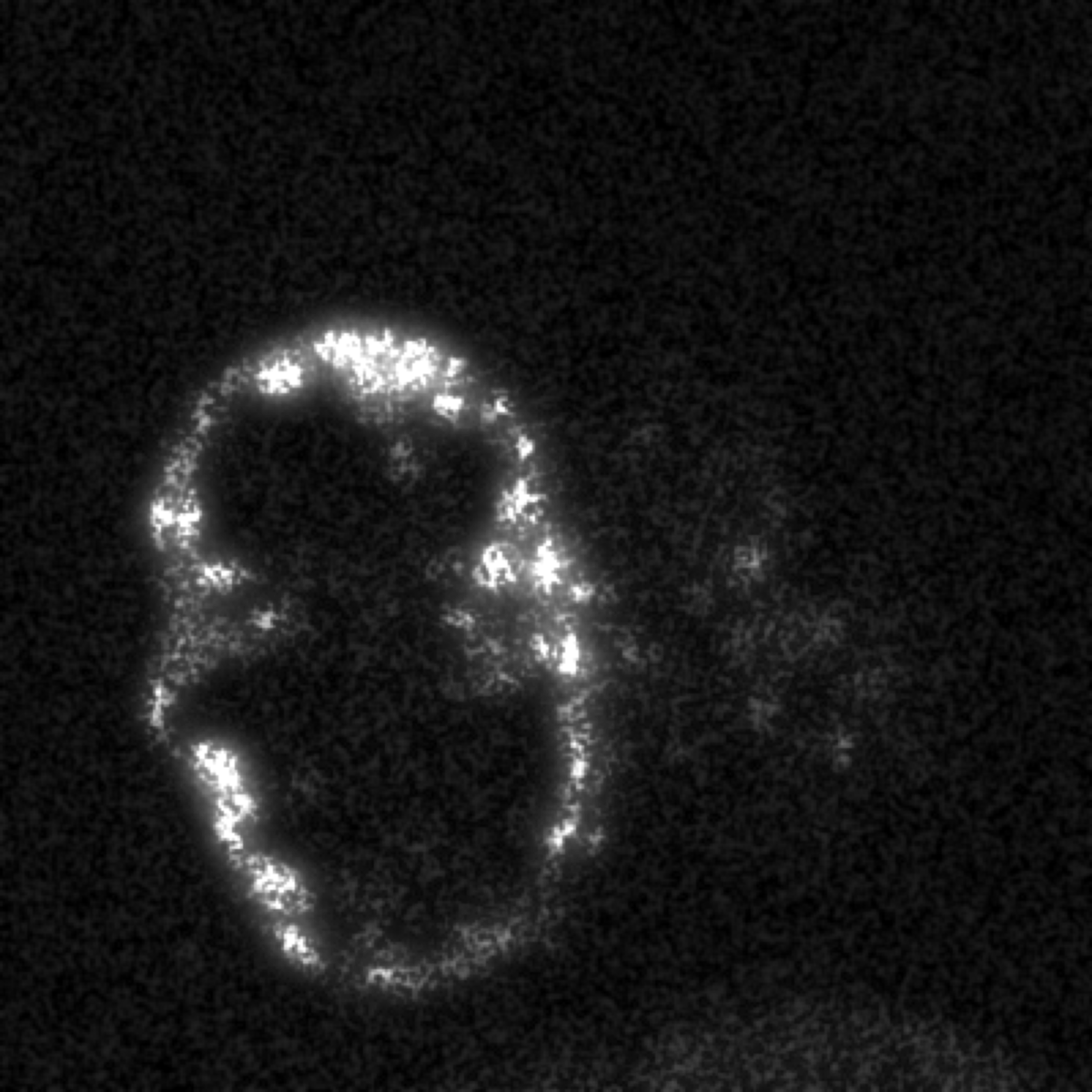} &
\includegraphics[width=0.22\textwidth]{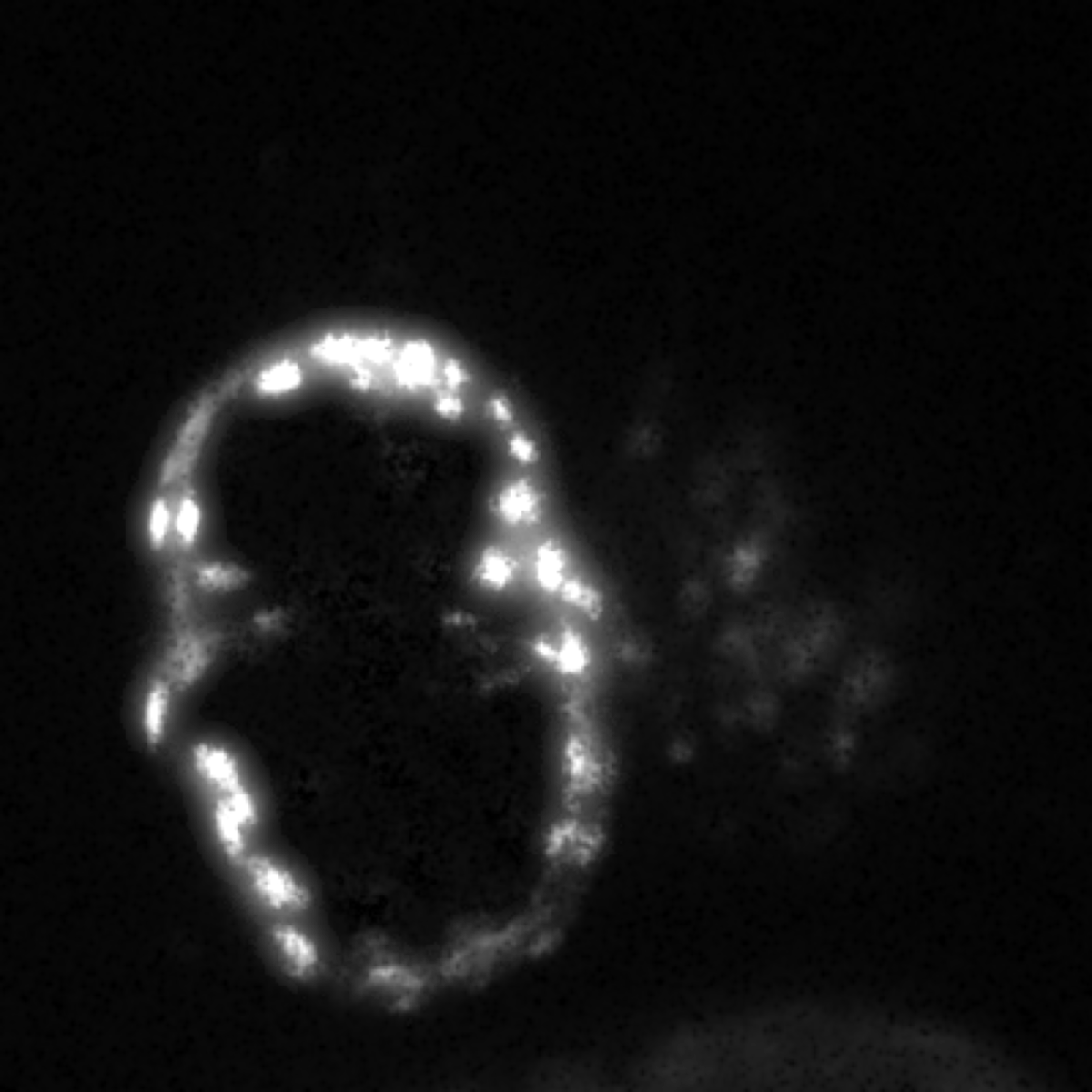} &
\includegraphics[width=0.22\textwidth]{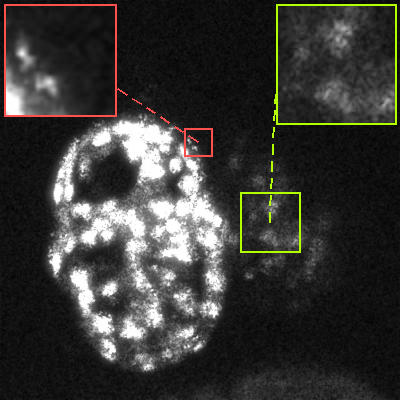} &
\includegraphics[width=0.22\textwidth]{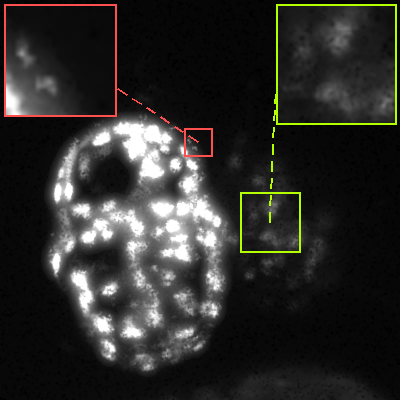} \\[3pt]

\makebox[0pt][r]{\scriptsize\textbf{TV}\hspace{0.6em}}%
\includegraphics[width=0.22\textwidth]{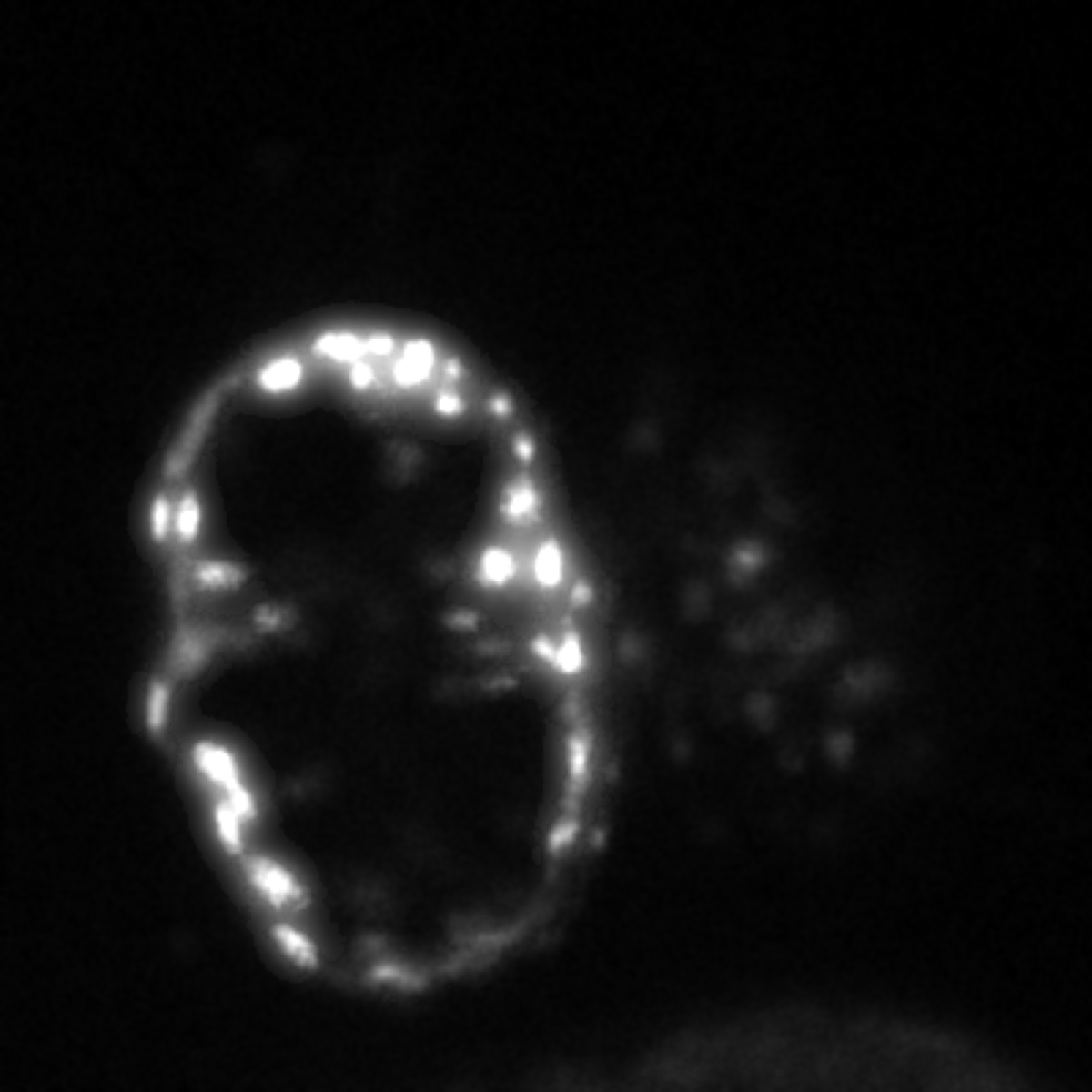} &
\includegraphics[width=0.22\textwidth]{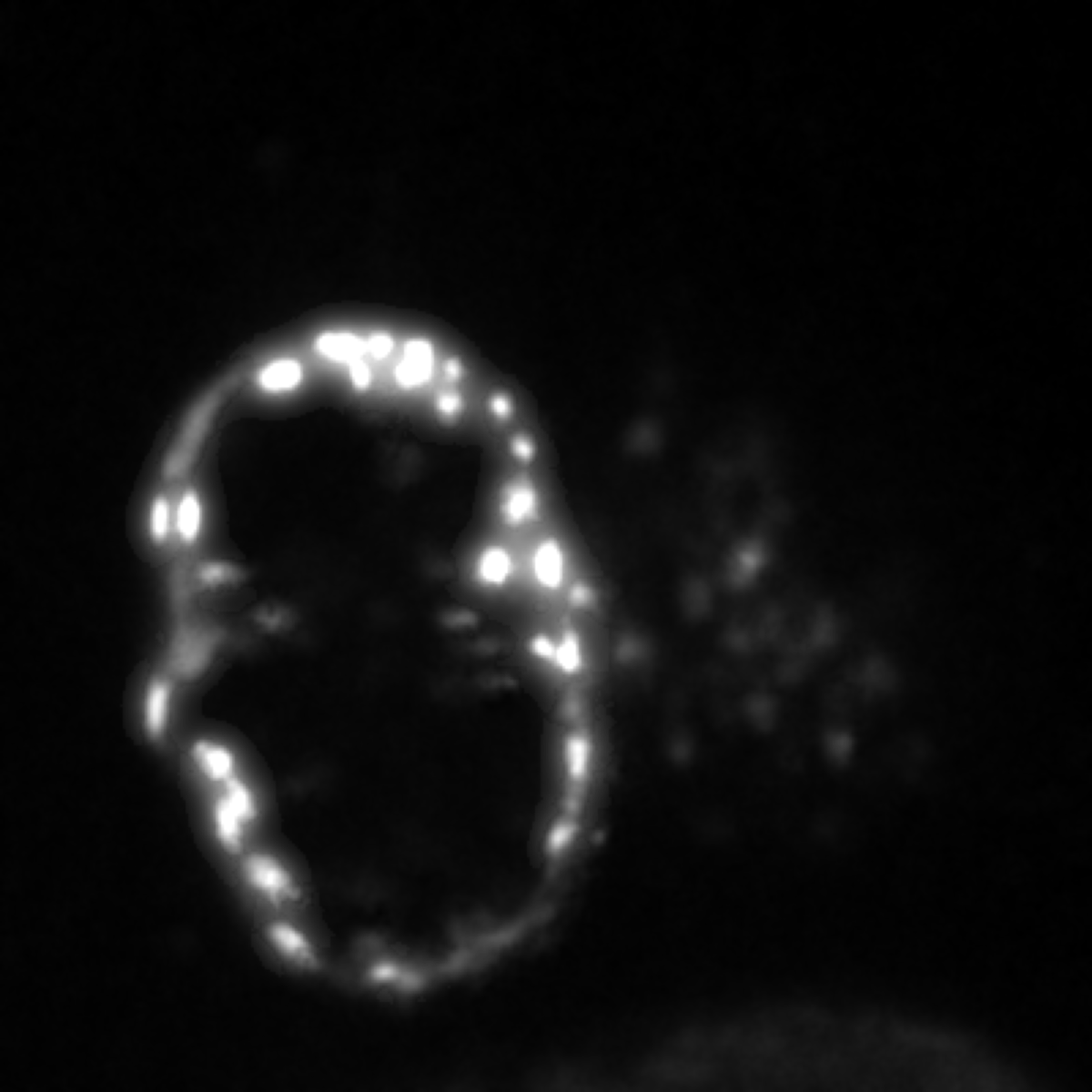} &
\includegraphics[width=0.22\textwidth]{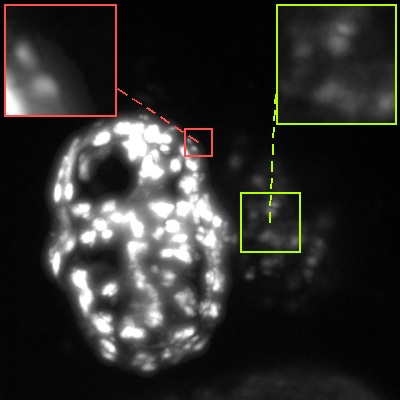} &
\includegraphics[width=0.22\textwidth]{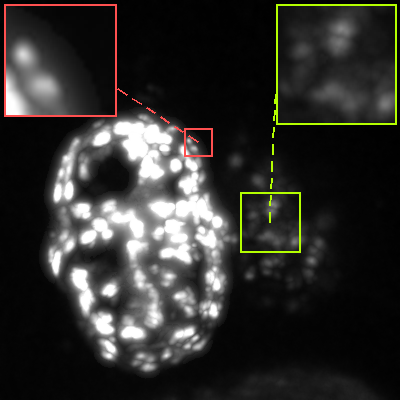} \\[3pt]

\makebox[0pt][r]{\scriptsize\textbf{BM3D}\hspace{0.6em}}%
\includegraphics[width=0.22\textwidth]{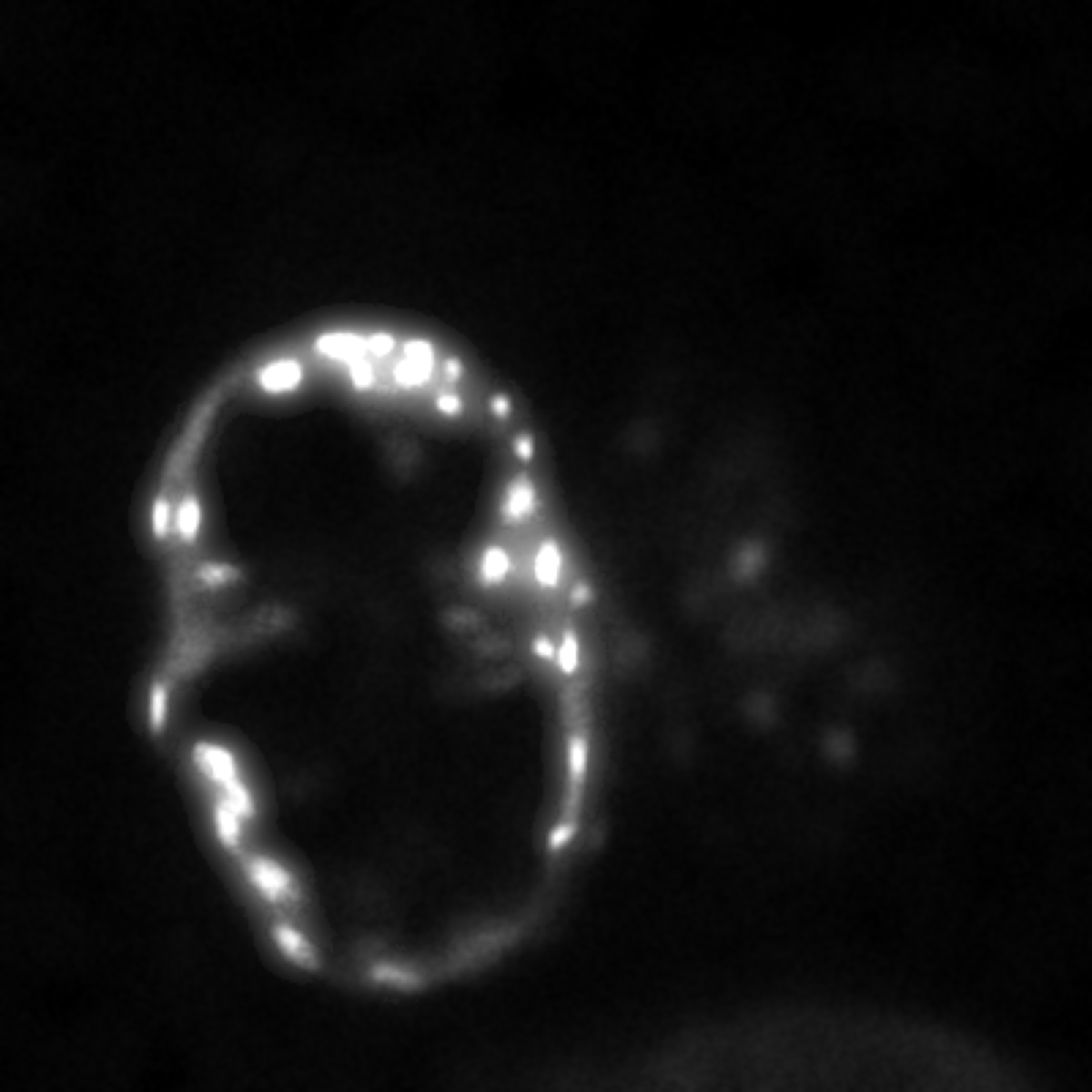} &
\includegraphics[width=0.22\textwidth]{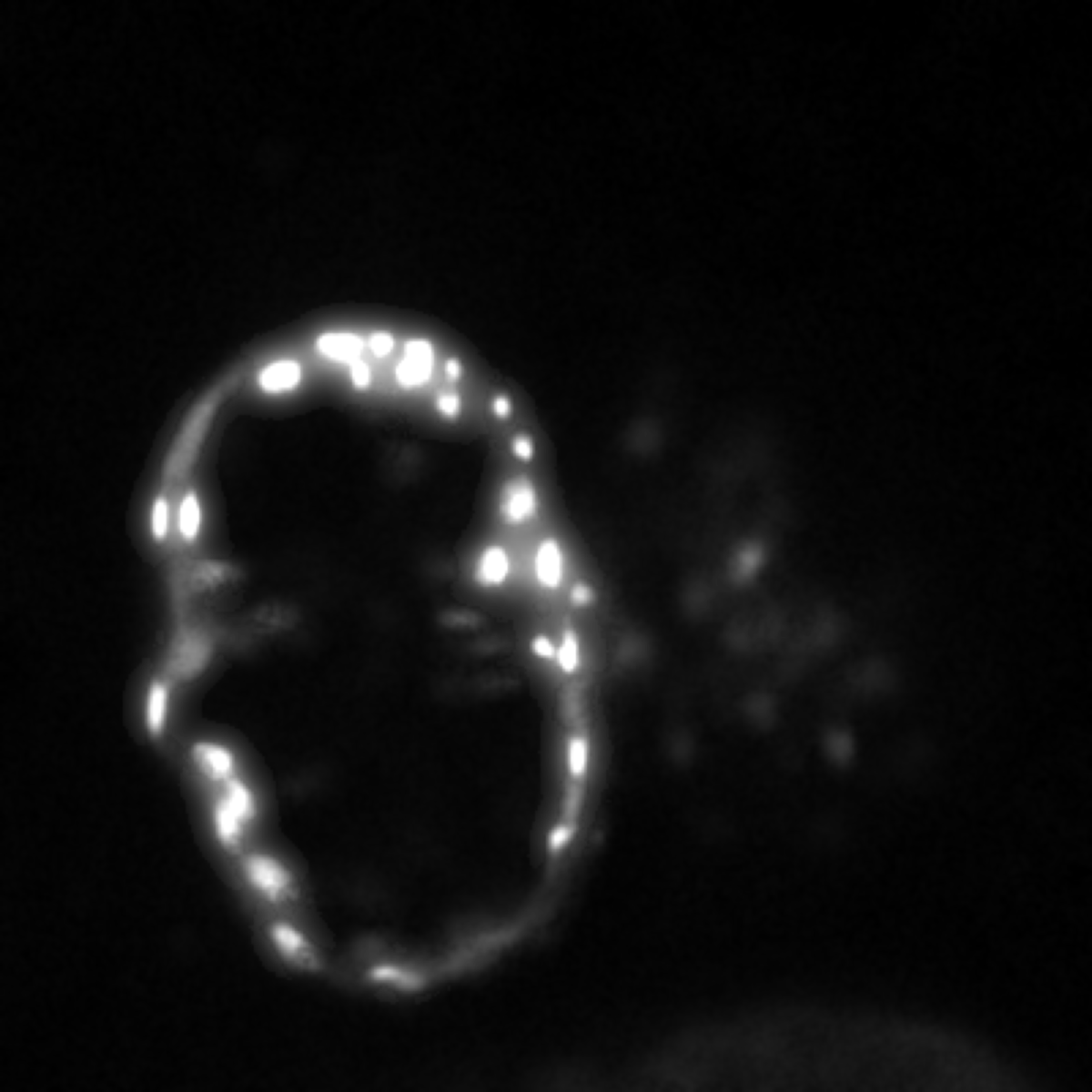} &
\includegraphics[width=0.22\textwidth]{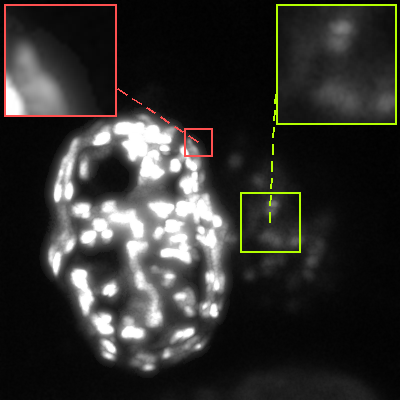} &
\includegraphics[width=0.22\textwidth]{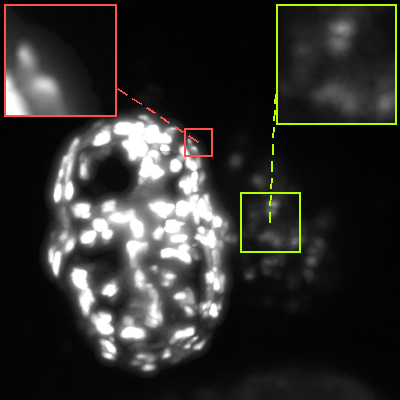} \\[3pt]

\makebox[0pt][r]{\scriptsize\textbf{DnCNN}\hspace{0.6em}}%
\includegraphics[width=0.22\textwidth]{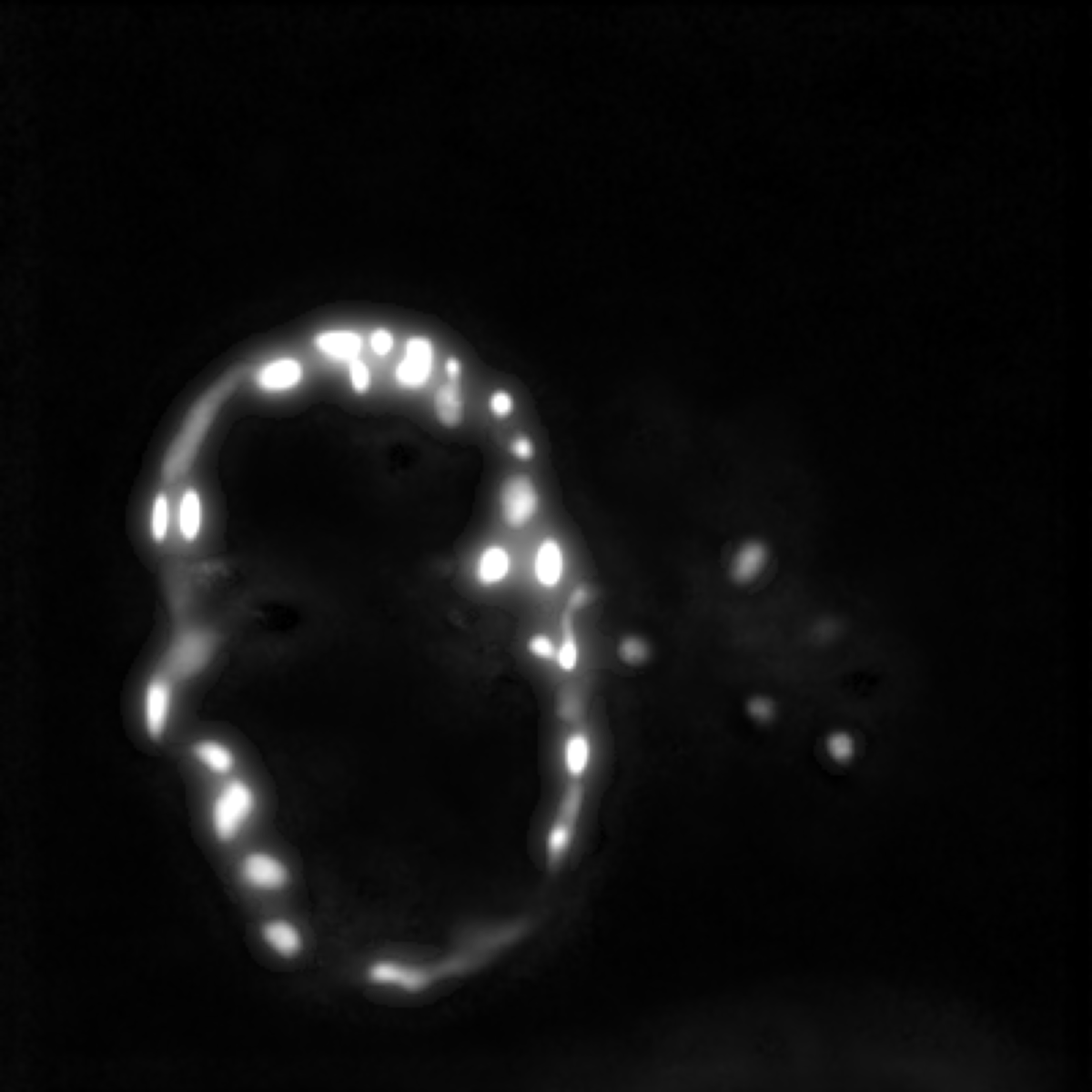} &
\includegraphics[width=0.22\textwidth]{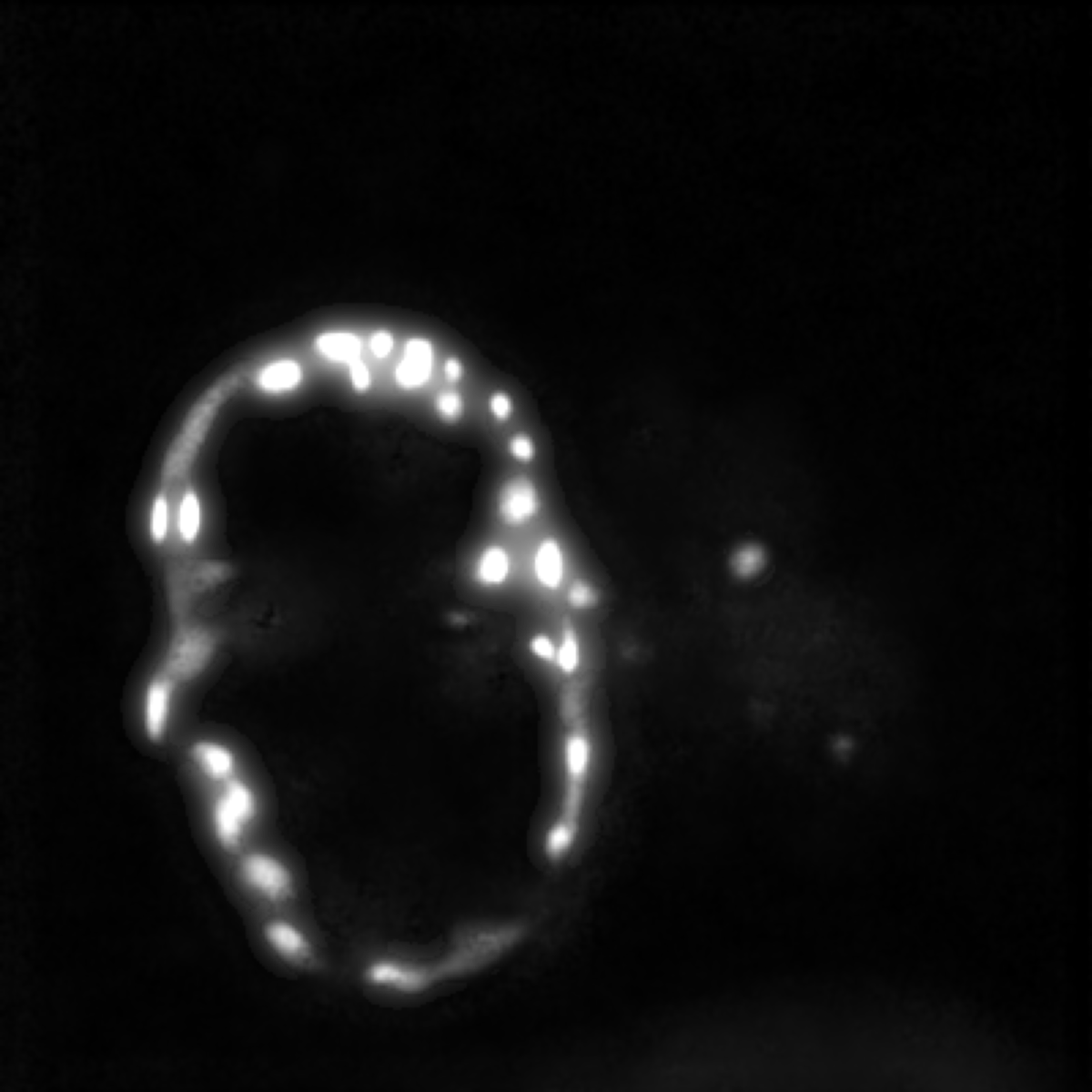} &
\includegraphics[width=0.22\textwidth]{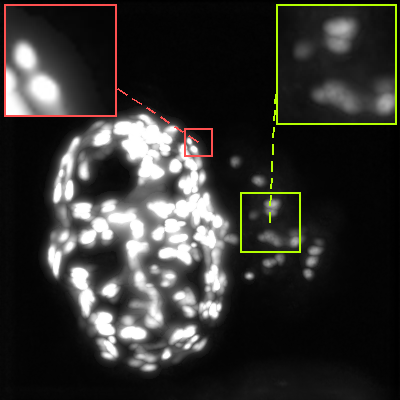} &
\includegraphics[width=0.22\textwidth]{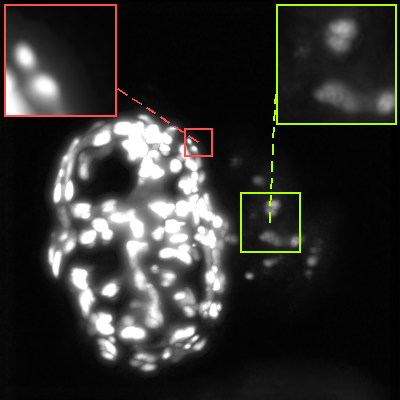} \\[3pt]

\makebox[0pt][r]{\scriptsize\textbf{FFDNet}\hspace{0.6em}}%
\includegraphics[width=0.22\textwidth]{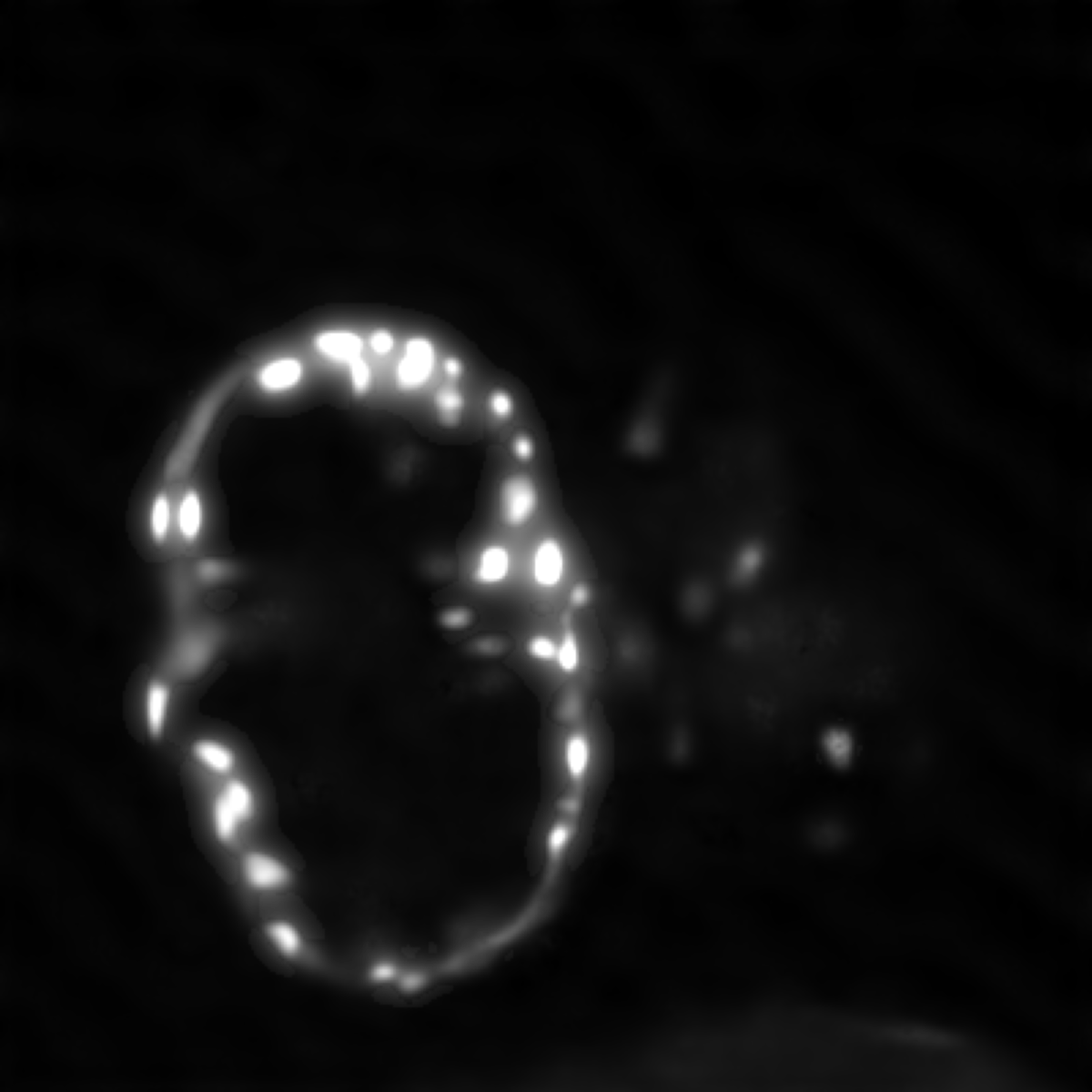} &
\includegraphics[width=0.22\textwidth]{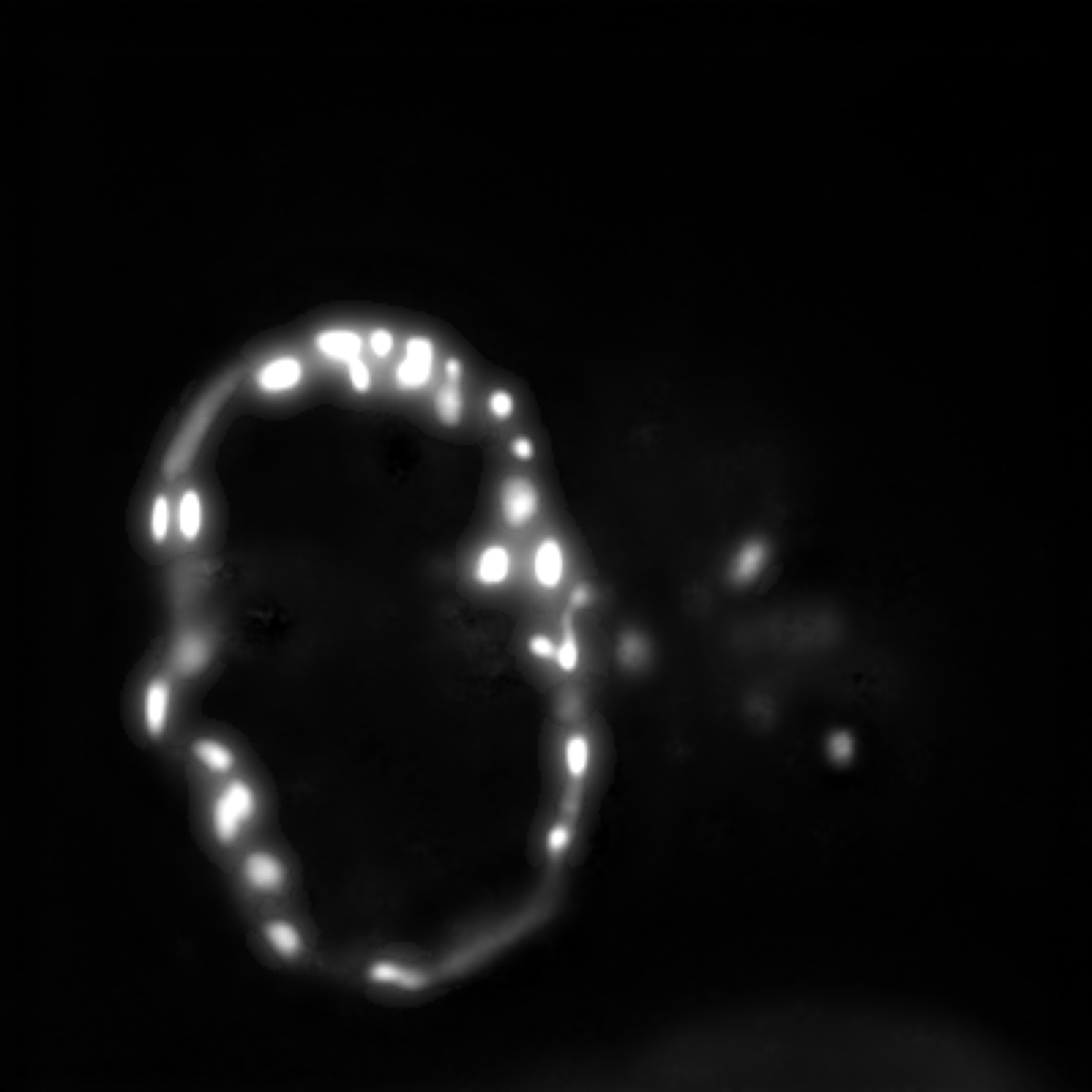} &
\includegraphics[width=0.22\textwidth]{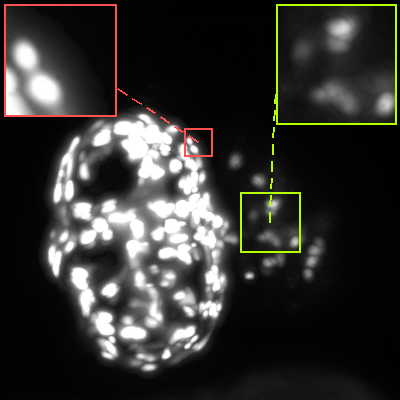} &
\includegraphics[width=0.22\textwidth]{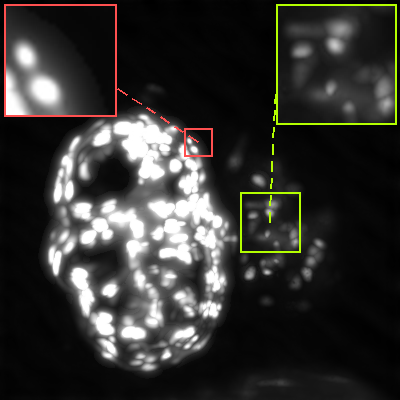} \\[3pt]

\makebox[0pt][r]{\scriptsize\textbf{DRUNet}\hspace{0.6em}}%
\includegraphics[width=0.22\textwidth]{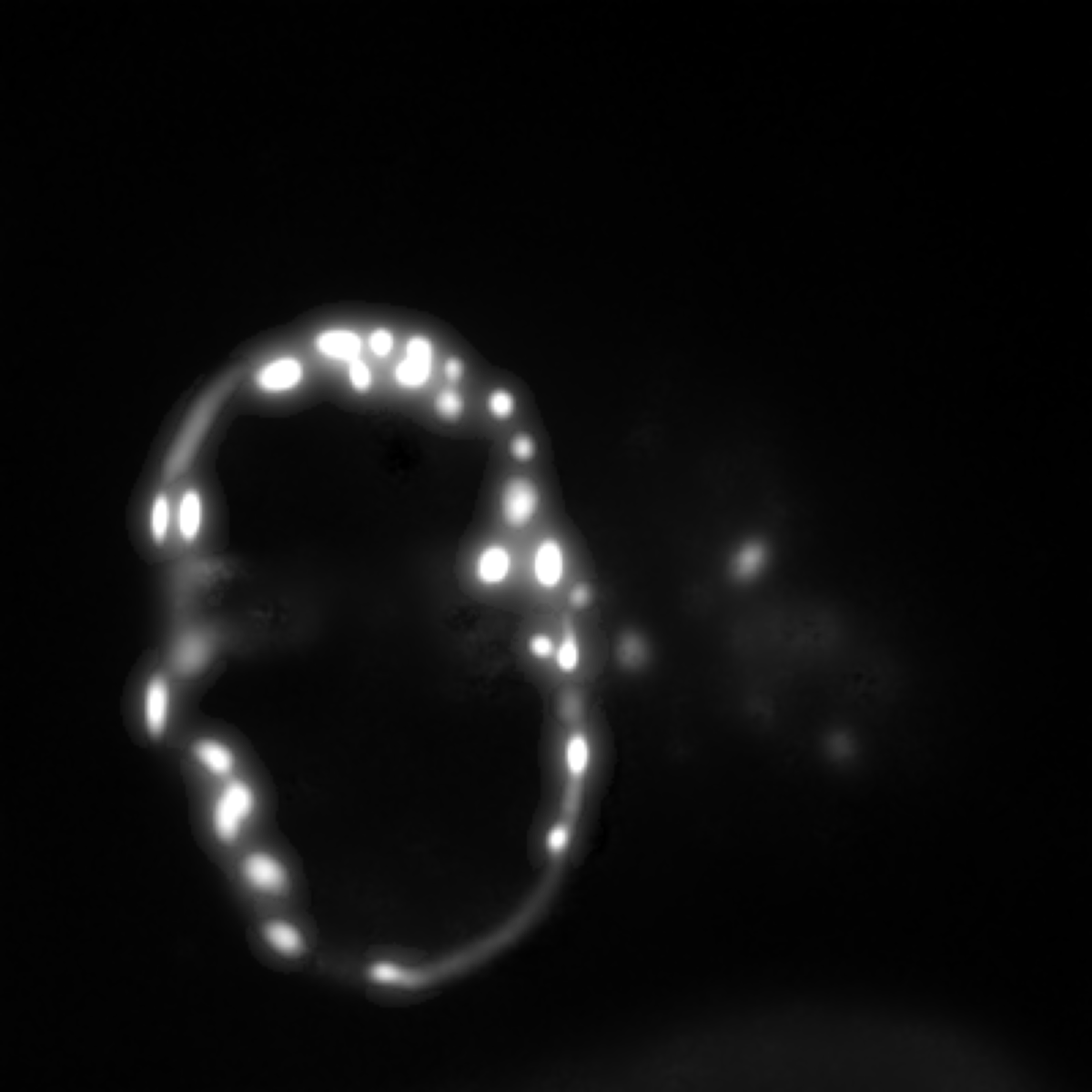} &
\includegraphics[width=0.22\textwidth]{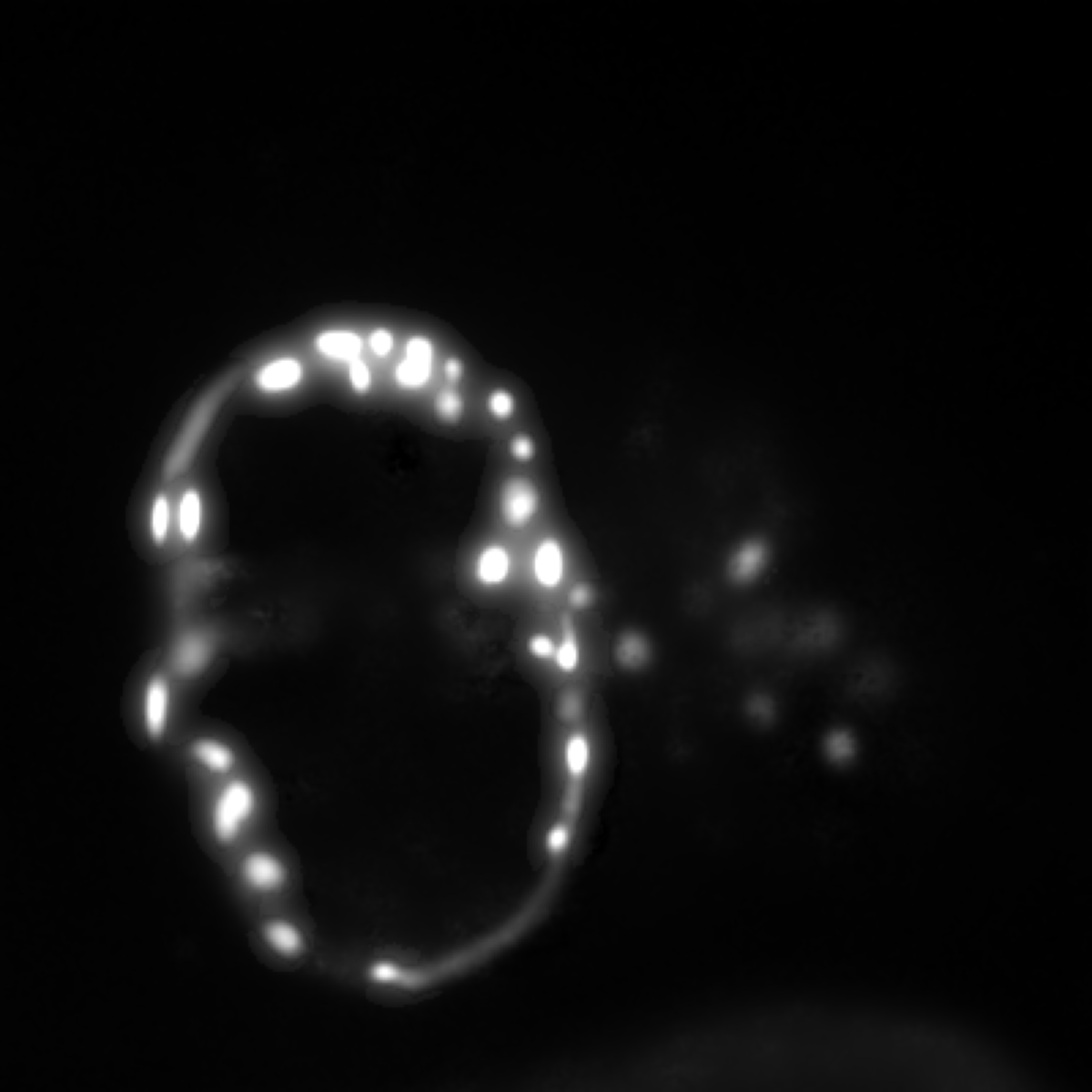} &
\includegraphics[width=0.22\textwidth]{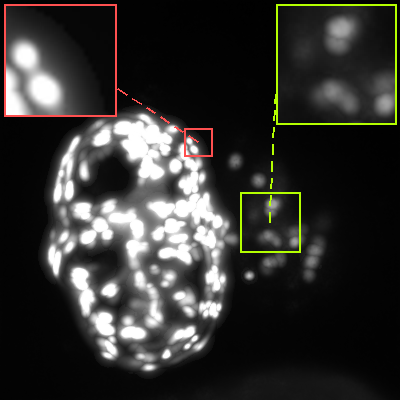} &
\includegraphics[width=0.22\textwidth]{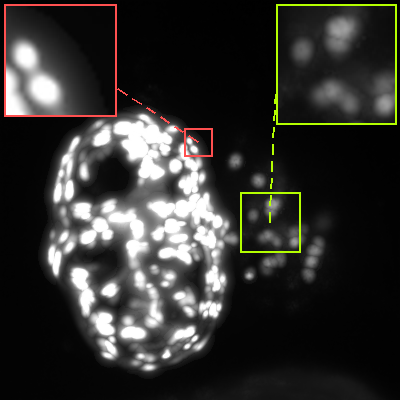} \\

\end{tabular}%
}
\caption{Qualitative comparison on synthetic data using a representative axial slice and MIP. }
\label{fig:synthetic_qualitative_7x4}
\end{figure*}

\begin{figure*}[t]
    \centering
    \captionsetup[subfigure]{labelformat=parens,labelsep=none}

    \begin{subfigure}{0.10\textwidth}
        \centering
        \includegraphics[width=\linewidth]{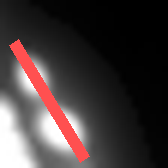}
        \caption{}
        \label{fig:red_profile_location}
    \end{subfigure}
    \hfill
    \begin{subfigure}{0.42\textwidth}
        \centering
        \includegraphics[width=\linewidth]{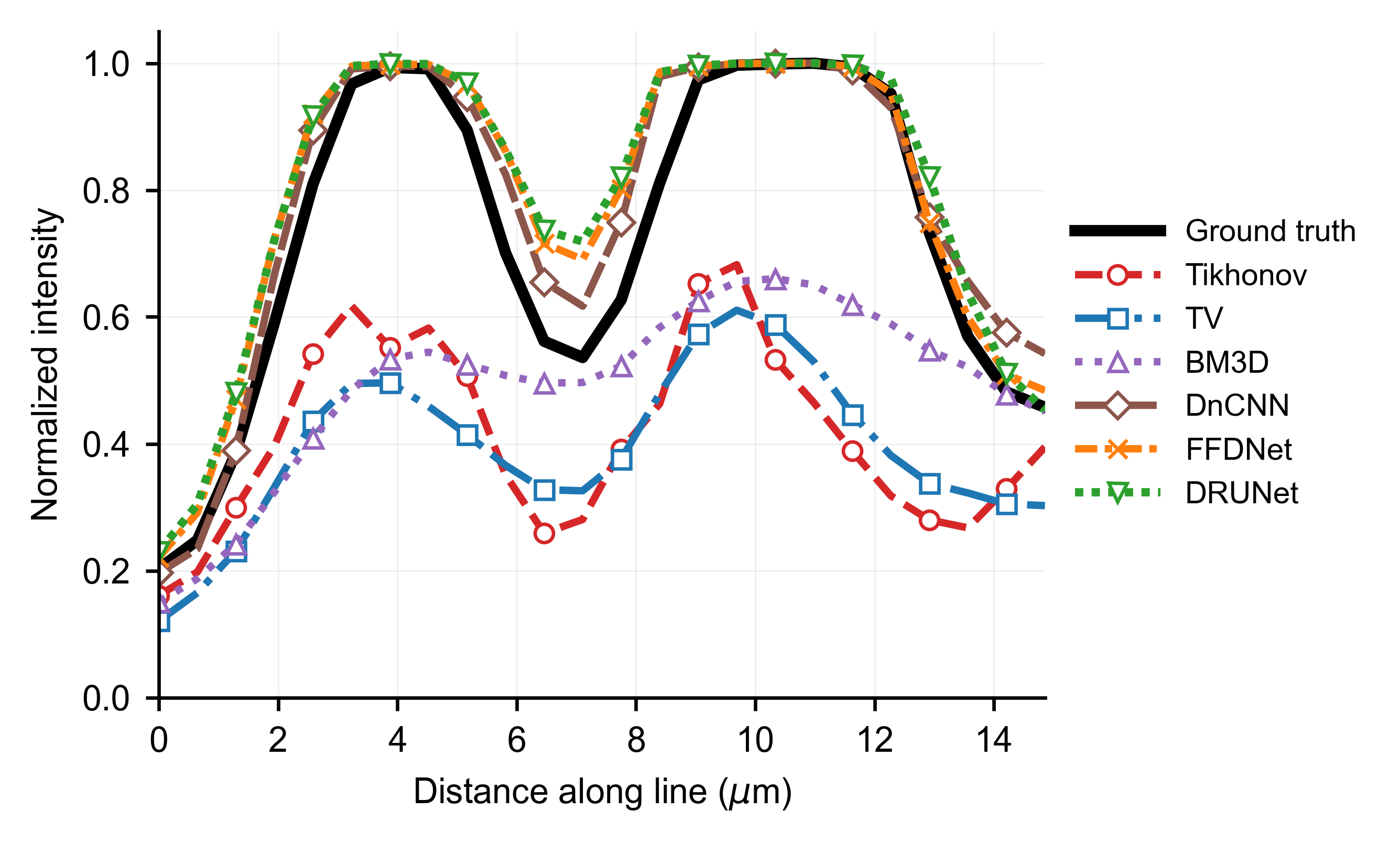}
        \caption{}
        \label{fig:red_profile_slice}
    \end{subfigure}
    \hfill
    \begin{subfigure}{0.42\textwidth}
        \centering
        \includegraphics[width=\linewidth]{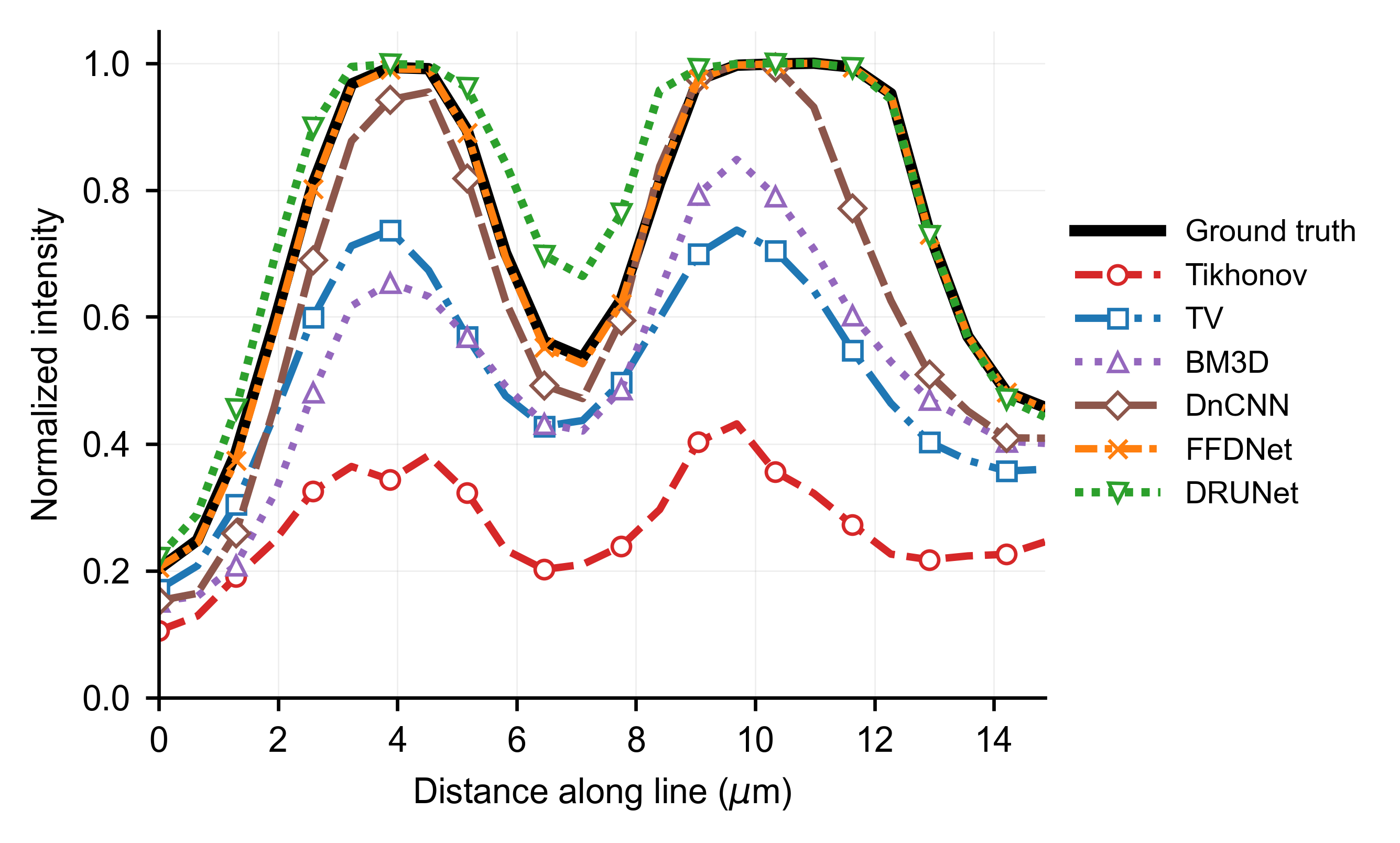}
        \caption{}
        \label{fig:red_profile_axial}
    \end{subfigure}

    \caption{Intensity line profiles from the red zoomed region in Fig.~\ref{fig:synthetic_qualitative_7x4}.  (a) Profile location. 
    (b) Slice-based reconstructions.  (c) Axial-coupled reconstructions.  The profiles show peak separation and local contrast relative to the ground truth. 
    }
    \label{fig:red_line_profile}
\end{figure*}

\paragraph{Qualitative results}

In all qualitative visualizations, the ventricle and atrium of zebrafish-heart are labeled as V and A respectively for anatomical reference, and all scale bars represent 30 $\mu m$. Figure~\ref{fig:synthetic_qualitative_7x4} presents visual reconstruction results on the synthetic dataset.  For each method, we display slice 15 as a representative axial slice, together with the maximum intensity projection (MIP) along the axial direction. MIP is a standard volume-rendering technique that projects the maximum intensity along viewing ray \cite{fishman2006volume}, making it well suited for visualizing bright fluorescent structures in 3D. Here, the slice view assesses local image fidelity at a fixed depth, while the MIP provides a global visual summary of the reconstructed volume, emphasizing volumetric continuity, background suppression, and the recovery of bright cellular or nuclear signals in 3D. We further highlight two zoomed-in regions (red and green) to illustrate key reconstruction challenges: avoiding the merging or distortion of adjacent bright structures and preserving weak signals in low-intensity regions. These features are critical for downstream tasks, including cell detection, segmentation, and 3D/4D tracking in zebrafish-heart imaging. Undetected weak signals, merged neighboring structures, and spurious background artifacts can bias the quantitative analyses of cardiac morphology and contractile motion \cite{zhang2024_4d,saberigarakani2025volumetric}.

The main visual differences across denoisers lie in fine-structure recovery and background cleanliness. Tikhonov recovers the coarse cardiac shape but leaves a noisy background in the slice-based reconstruction and suppresses many weak cellular or nuclear signals. TV gives a cleaner and sharper baseline, but it still oversmooths weak signals and thin structures. BM3D preserves more local detail than TV, although residual low-intensity artifacts remain visible. The DL-based denoisers recover brighter and better-separated cellular or nuclear signals, with FFDNet and DRUNet appearing closest to the ground truth. 
This is especially visible in the red zoomed region, where FFDNet and DRUNet better preserve the separation and morphology of neighboring bright cellular or nuclear signals. To further quantify this local comparison, Fig.~\ref{fig:red_line_profile} shows intensity profiles as a function of spatial position along a representative oblique line in the red zoomed region. The profiles are plotted against physical distance in $\mu$m and globally normalized using the intensity range across all compared images. FFDNet and DRUNet more closely match the ground-truth peak locations and valley structure, whereas Tikhonov and TV tend to suppress peak amplitudes and reduce local contrast.

The effect of axial coupling is more evident in the MIP views, where slice-to-slice inconsistencies accumulate across the axial stack. Compared with the slice-based reconstructions, the axial-coupled results generally show cleaner backgrounds and cardiac boundaries, and more stable recovery of cellular structures across $z$. 
The improvement of axial coupling appears in different local regions for different denoisers. For example, BM3D shows improved separation and morphology of adjacent bright signals in the red zoomed region, while FFDNet recovers weak signals in the green zoomed region closer to the ground truth. 
Comparing Figures~\ref{fig:red_profile_slice} and~\ref{fig:red_profile_axial}, the axial-coupled profiles are generally closer to the ground truth for the stronger denoisers, especially in preserving the two-peak structure and the intervening valley. 

Overall, the qualitative results agree with the quantitative comparison in Table~\ref{tab:synthetic_quantitative_main}. FFDNet and DRUNet provide the strongest visual reconstructions, while axial coupling improves the consistency of the reconstructed volume, especially for weak cellular or nuclear signals and for classical priors such as Tikhonov and TV.
Together, these results highlight that the choice of denoiser determines the baseline reconstruction quality, while axial coupling provides additional benefit by leveraging inter-slice correlations, particularly when the slice-based prior is relatively weak.

\subsection{Real data}
\begin{figure*}
\centering
\setlength{\tabcolsep}{2pt}
\renewcommand{\arraystretch}{1.02}

\begin{tabular}{@{}c c c@{}}
\textbf{Tikhonov} & \textbf{TV} & \textbf{BM3D} \\[3pt]

\makebox[0pt][r]{\scriptsize\textbf{Slice-based}\hspace{0.6em}}%
\includegraphics[width=0.26\textwidth]{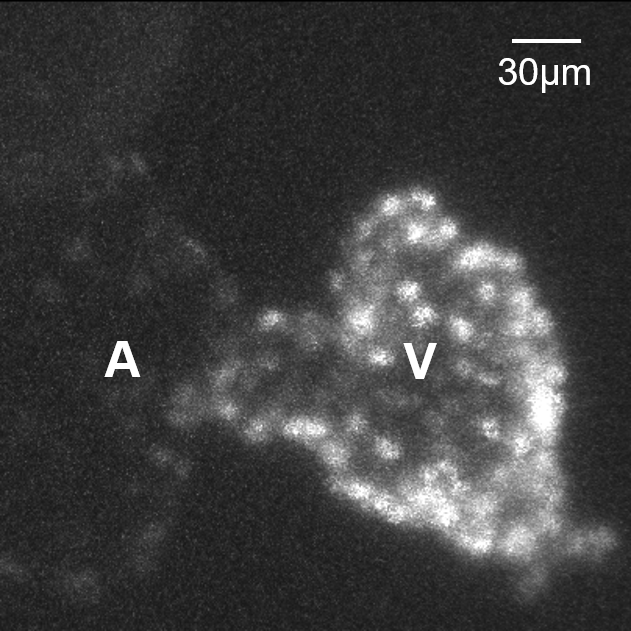} &
\includegraphics[width=0.26\textwidth]{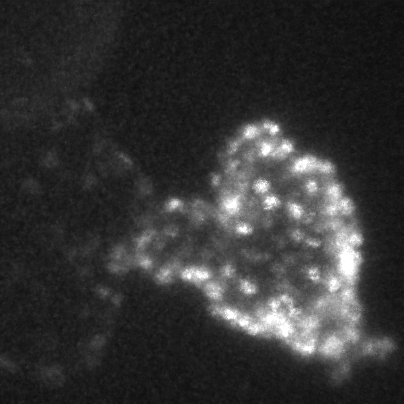} &
\includegraphics[width=0.26\textwidth]{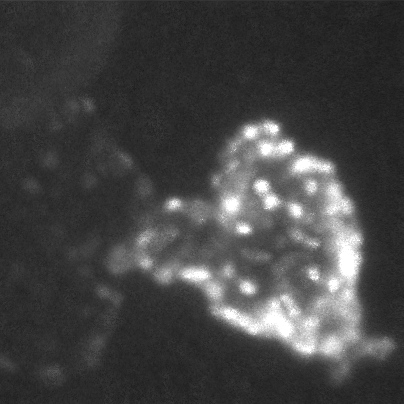} \\[2pt]

\makebox[0pt][r]{\scriptsize\textbf{Axial-coupled}\hspace{0.6em}}%
\includegraphics[width=0.26\textwidth]{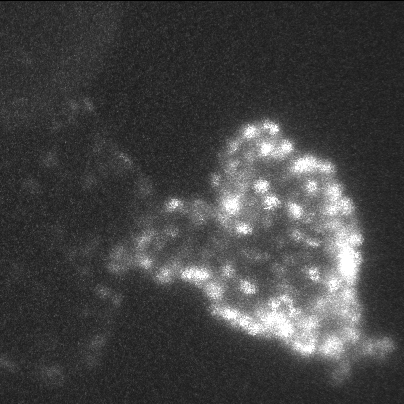} &
\includegraphics[width=0.26\textwidth]{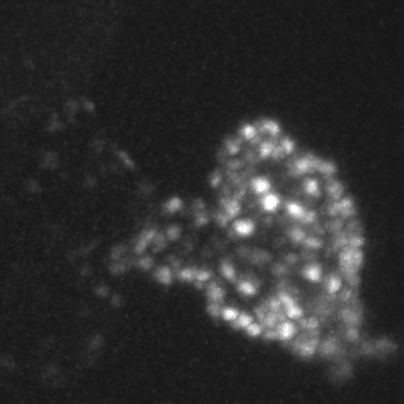} &
\includegraphics[width=0.26\textwidth]{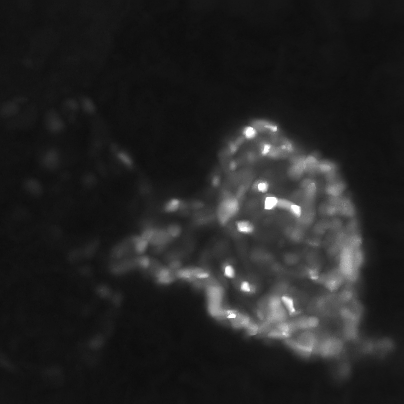} \\[6pt]

\textbf{DnCNN} & \textbf{FFDNet} & \textbf{DRUNet} \\[3pt]

\makebox[0pt][r]{\scriptsize\textbf{Slice-based}\hspace{0.6em}}%
\includegraphics[width=0.26\textwidth]{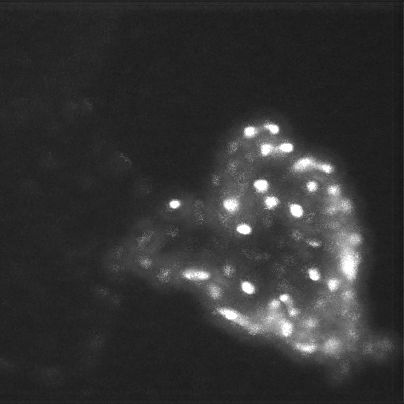} &
\includegraphics[width=0.26\textwidth]{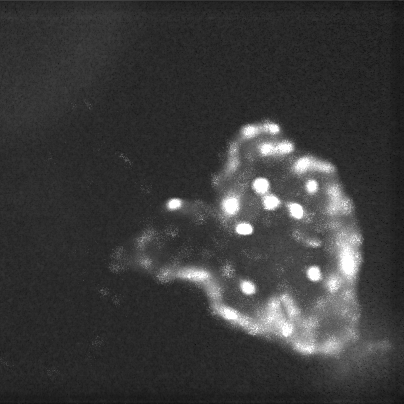} &
\includegraphics[width=0.26\textwidth]{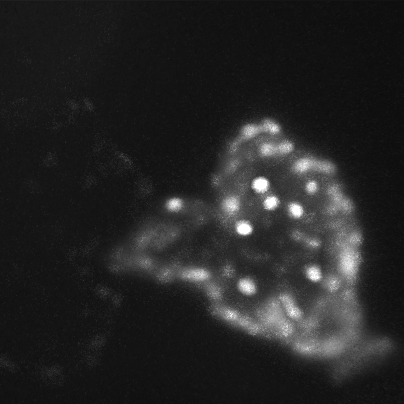} \\[2pt]

\makebox[0pt][r]{\scriptsize\textbf{Axial-coupled}\hspace{0.6em}}%
\includegraphics[width=0.26\textwidth]{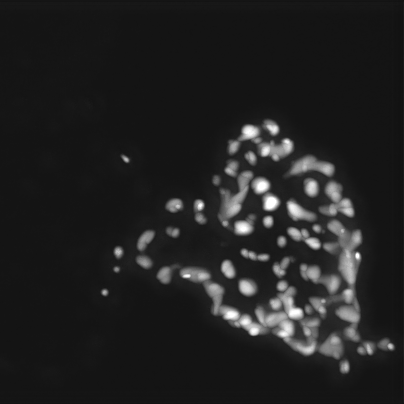} &
\includegraphics[width=0.26\textwidth]{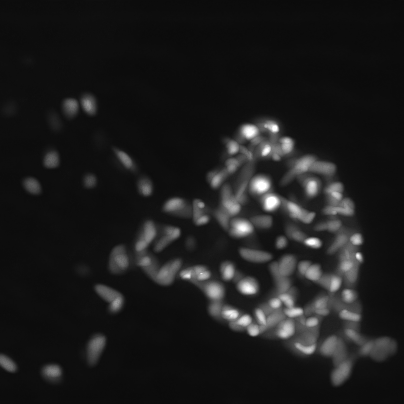} &
\includegraphics[width=0.26\textwidth]{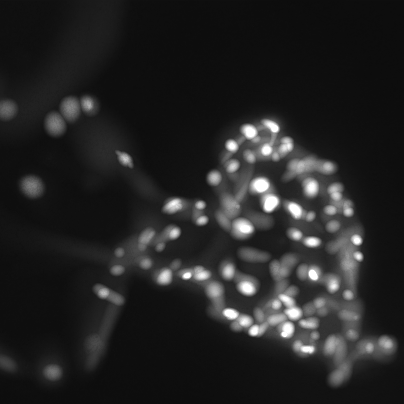}
\end{tabular}

\caption{Qualitative real-data comparison using MIPs.}
\label{fig:real_visual_main}
\end{figure*}

Figure~\ref{fig:real_visual_main} shows real-data reconstruction results using MIPs. Because the real-data evaluation is qualitative, we focus on structural continuity, local contrast, background suppression, and the visibility of cellular or nuclear signals rather than voxel-wise fidelity. The comparison examines practical trade-offs among different denoisers, specifically in terms of weak-signal preservation, background suppression, and the recovery of plausible cardiac morphology.

Among the classical priors, Tikhonov and TV retain the main cardiac region and many diffuse low-intensity structures, but at the cost of noticeable background haze and relatively low local contrast. BM3D reduces background variation and yields a cleaner appearance, though some weak structures may also be attenuated in the process. The DL-based denoisers produce sharper bright signals and higher local contrast, but they tend to emphasize the most prominent structures while suppressing some diffuse weak signals. Among them, FFDNet and DRUNet provide the clearest separation between bright nuclear signals and the background.

Axial coupling primarily improves the real-data MIPs by reducing background haze and enhancing the visibility of bright structures. This effect is most pronounced for the DL-based denoisers, where the axial-coupled reconstructions exhibit sharper and more distinctly separated cellular or nuclear signals compared with slice-based results. However, without a matched ground truth, sharper punctate signals alone do not establish improved reconstruction fidelity; they may reflect genuine contrast improvement, denoiser-induced enhancement, or both. Their practical relevance therefore depends on the downstream task.

The real-data experiment presents additional challenges: the effective sensing operator may deviate from the ideal forward model due to mask misalignment, nonideal modulation, or calibration errors, and the noise statistics may not exactly match the assumed model. Without ground truth, the comparison instead reveals practical visual trade-offs: classical priors tend to retain diffuse low-intensity content but leave stronger background haze, while DL-based denoisers improve local contrast and background suppression at the cost of reduced visibility of weaker diffuse signals. This trade-off highlights a practical advantage of the PnP formulation, namely that the reconstruction prior can be selected according to the needs of the downstream task.

\subsection{Effect of compression ratio}\label{sect:exp4CR}

We investigate the dependence of reconstruction quality on the compression ratio $R$, defined as the number of axial slices multiplexed into one camera exposure. Larger values of $R$ improve acquisition efficiency by reducing the number of measurements per volume, but also increase the degree of ill-posedness of the inverse problem.

Figure~\ref{fig:cr_study_main} reports PSNR and SSIM on the synthetic dataset for TV and FFDNet, representing classical and DL-based denoisers, respectively. For both methods, reconstruction quality generally decreases as $R$ increases because each reconstructed volume is supported by fewer measurements. For TV, the axial-coupled model consistently outperforms the slice-based model across the tested compression ratios, especially in SSIM. 
This indicates that inter-slice correlation provides useful structural support when the slice-wise prior is relatively simple. For FFDNet, the slice-based reconstruction is already strong at low compression ratios, so the additional gain from axial coupling is smaller and not uniformly positive across all $R$. However, at higher compression ratios like $R=10$, the axial-coupled FFDNet reconstruction improves both PSNR and SSIM over slice-based result. 

\begin{figure}
    \centering
    \begin{subfigure}{0.48\linewidth}
        \centering
        \includegraphics[width=\linewidth]{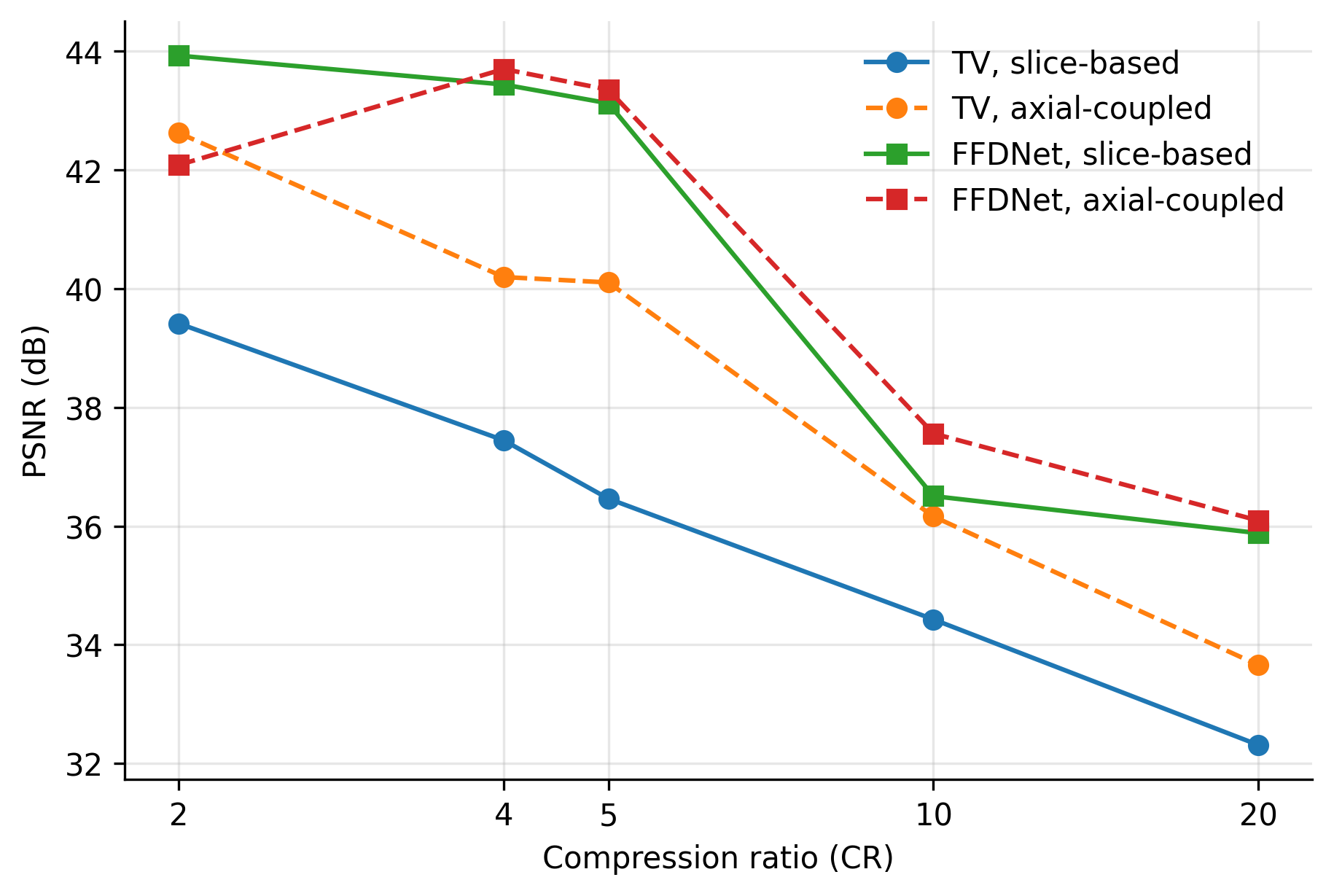}
        \caption{PSNR}
        \label{fig:cr_psnr}
    \end{subfigure}
    \hfill
    \begin{subfigure}{0.48\linewidth}
        \centering
        \includegraphics[width=\linewidth]{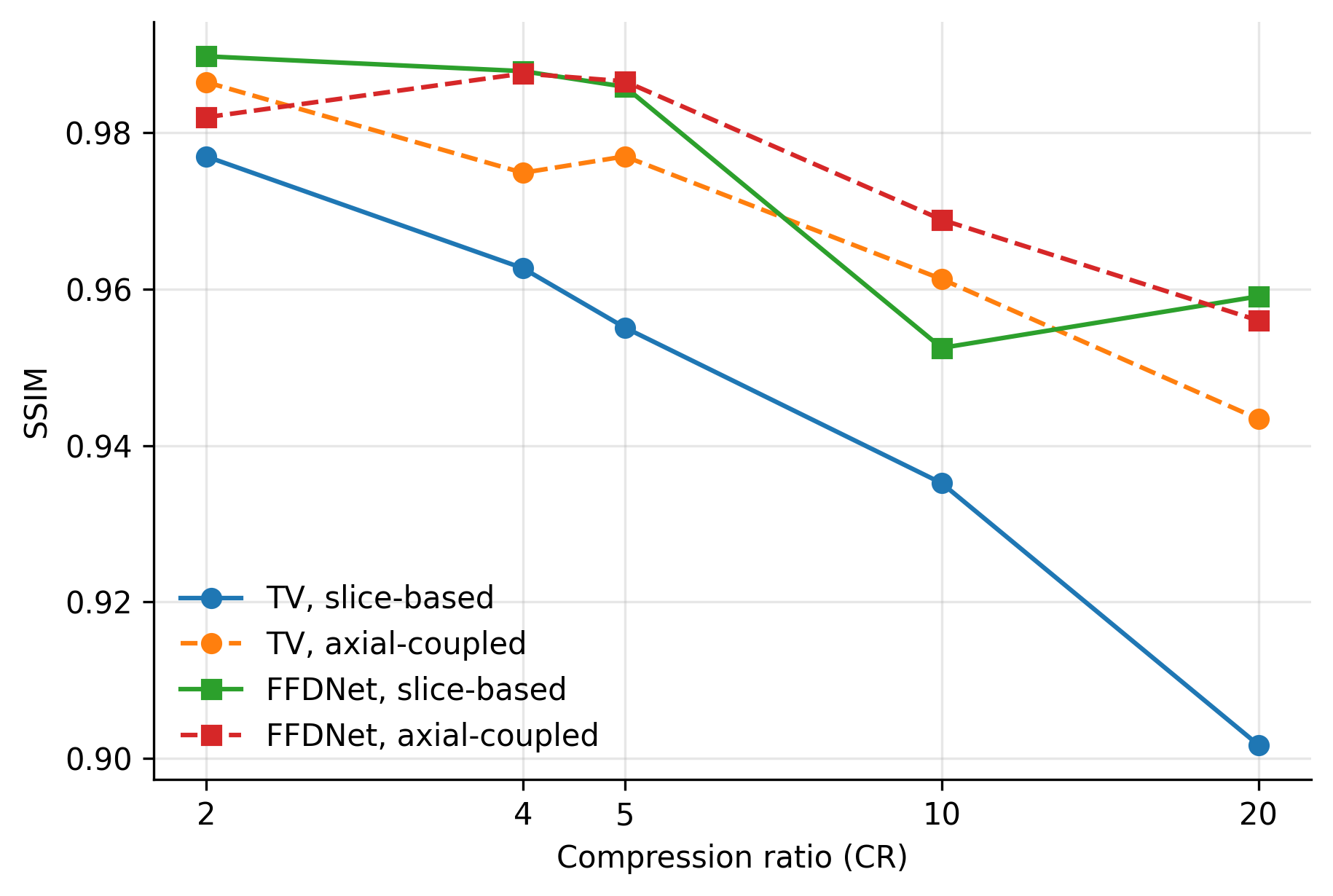}
        \caption{SSIM}
        \label{fig:cr_ssim}
    \end{subfigure}
    \caption{Effect of compression ratio $R$ on synthetic reconstruction quality for TV and FFDNet. Each curve reports slice-averaged PSNR or SSIM for the slice-based and axial-coupled models.}
    \label{fig:cr_study_main}
\end{figure}

Overall, the choice of $R$ should be guided by the downstream task. When the analysis relies on weak cellular or nuclear signals, separation of neighboring cells, or accurate 3D/4D tracking, a smaller $R$ is preferable to preserve fine structural details. 
When acquisition efficiency is more important and the task can tolerate some loss of fine-scale information, a larger $R$ may be acceptable. The axial-coupled model does not remove this acquisition-reconstruction trade-off, but it can partially offset the loss of measurement information at stronger compression.

\section{Conclusion and future work} \label{sec:conclusion}

We proposed a PnP-ADMM framework for volumetric reconstruction in CS-LSM. The method handles binary-mask encoded axial measurements and flexibly incorporates any off-the-shelf denoiser, encompassing classical and DL-based methods alike. We developed a slice-based approach as a tractable baseline, while the axial-coupled extension exploits inter-slice continuity to improve reconstruction quality across the volume. Furthermore, we established subsequential convergence of the proposed scheme under a weakly convex regularization assumption, with every accumulation point shown to be a stationary point of the corresponding variational formulation. Experiments on zebrafish-heart data show that the proposed framework can recover cellular structures from compressed measurements while accommodating different denoising methods. The axial-coupled model generally improves reconstruction quality and volumetric continuity over the slice-based model. Future work will investigate tuning-free approaches based on deep unrolling \cite{chen2026deep} and reinforcement learning \cite{wei2020tuning}, as well as extend the proposed framework to 4D spatiotemporal reconstruction. On the theoretical side, establishing convergence guarantees for the proposed axial-coupled PnP framework, beyond the proximal-based denoiser regime, remains an important open problem.

\bibliographystyle{siamplain}
\bibliography{ref}

\end{document}